\documentclass{article}


\usepackage[preprint,nonatbib]{neurips_2022}
\usepackage{papercommands}
\usepackage{adjustbox}
\usepackage{comment}
\usepackage{multirow}
\usepackage{caption}
\usepackage{float}

\newcommand*\samethanks[1][\value{footnote}]{\footnotemark[#1]}




\usepackage[utf8]{inputenc} 
\usepackage[T1]{fontenc}    
\usepackage{hyperref}       
\usepackage{url}            
\usepackage{booktabs}       
\usepackage{amsfonts}       
\usepackage{nicefrac}       
\usepackage{microtype}      
\usepackage{xcolor}         

\usepackage[linesnumbered, ruled, vlined]{algorithm2e}
\usepackage[inline]{enumitem}

\usepackage{thm-restate}

\title{Pruning Neural Networks via Coresets and Convex Geometry: Towards No Assumptions}

%

\author{%
  Murad~Tukan\thanks{These authors equally contributed to this paper.}~ \thanks{Department of Computer Science, University of Haifa}\\
  \texttt{muradtuk@gmail.com} \\
  \And Loay~Mualem\samethanks[1] ~\samethanks[2]\\
  \texttt{loaymua@gmail.com} \\
  \And Alaa~Maalouf\samethanks[1] ~\samethanks[2]\\
  \texttt{alaamaalouf12@gmail.com} \\
}

\begin{document}

\maketitle

\begin{abstract}
Pruning is one of the predominant approaches for compressing deep neural networks (DNNs). Lately, coresets (provable data summarizations) were leveraged for pruning DNNs, adding the advantage of theoretical guarantees on the trade-off between the compression rate and the approximation error. However, coresets in this domain were either data-dependent or generated under restrictive assumptions on both the model's weights and inputs. 
In real-world scenarios, such assumptions are rarely satisfied, limiting the applicability of coresets. To this end, we suggest a novel and robust framework for computing such coresets under mild assumptions on the model's weights and without any assumption on the training data. The idea is to compute the importance of each neuron in each layer with respect to the output of the following layer. This is achieved by a combination of L\"{o}wner ellipsoid and Caratheodory theorem.
Our method is simultaneously data-independent, applicable to various networks and datasets (due to the simplified assumptions), and theoretically supported. Experimental results show that our method outperforms existing coreset based neural pruning approaches across a wide range of networks and datasets. For example, our method achieved a $62\%$ compression rate on ResNet50 on ImageNet with $1.09\%$ drop in accuracy.
\end{abstract}

\section{Introduction and Backround}
Deep neural networks (DNNs) achieved state-of-the-art (SOTA) performance on a large variety of tasks, e.g., in computer vision~\cite{he2016deep,krizhevsky2012imagenet}
and natural language processing (NLP;~\cite{radford2019language,devlin-etal-2019-bert}).
However, DNNs usually contain millions or even billions of parameters in order to achieve SOTA performances resulting in large storage requirements and long inference time. This is obstructive when, e.g., dealing with limited hardware or real-time systems such as autonomous cars and text/speech translation.
To this end, a large body of research is dedicated to reducing the size and inference costs of DNNs.


\textbf{Pruning.}
A dominant approach widely used for reducing the size of DNNs is to utilize a pruning algorithm to remove redundant parameters from the original, over-parameterized network. 
In general, pruning can be categorized into two main types: 
(i) Unstructured pruning~\cite{Han15,sipp2019} reduces the number of non-zero parameters by inducing sparsity into weight parameters, which can achieve high compression rates but requires specialized software and/or hardware in order to achieve faster inference times.  (ii) Structured pruning~\cite{he2018soft,liebenwein2020provable,mariet2015diversity} modifies the structure of the underlying weight tensors, by removing filters/neurons from each layer, usually resulting in smaller compression rates while directly achieving faster inference times with no specialized software; see section~\ref{sec:related}

\subsection{Coresets for Pruning}
Notably, many recent papers focused on various types of filter pruning~\cite{renda2020comparing, molchanov2019importance} potentially due to the empirical observation that existing filter pruning approaches consistently yield impressive results. However, most pruning methods are based on heuristics, lacking theoretical guarantees on the trade-off between the compression rate and the approximation error. 
This was the motive for introducing coresets~\cite{mussay2021data,liebenwein2020provable} to the world of pruning. 

\textbf{Coresets.} In machine learning, we are (usually) given an input set $P\subseteq\REAL^d$ of $n$ points, its corresponding weights function $w:P \to \REAL$, a feasible set of queries $X$, and a loss function $\phi:P \times X \to [0,\infty)$. The tuple $(P,w,X,\phi)$ is called \emph{query space}, and it defines the optimization problem at hand. 
For a given problem that is defined by its query space $(P,w,X,\phi)$, and an error parameter $\eps\in(0,1)$, an $\eps$-coreset is is a small weighted subset of the input points that approximates the loss of the input set $P$ for every feasible query $x$, up to a provable bound of $1+\eps$.

Since coresets approximate the cost of every query, traditional (possibly inefficient) algorithms/solvers can be applied on coresets to obtain an approximation of the optimal solution on the full data, using less time and memory; see Section~\ref{weak-coresets} in the appendix for more details.

\begin{figure}[htb!]
    \centering
    \includegraphics[width=\linewidth]{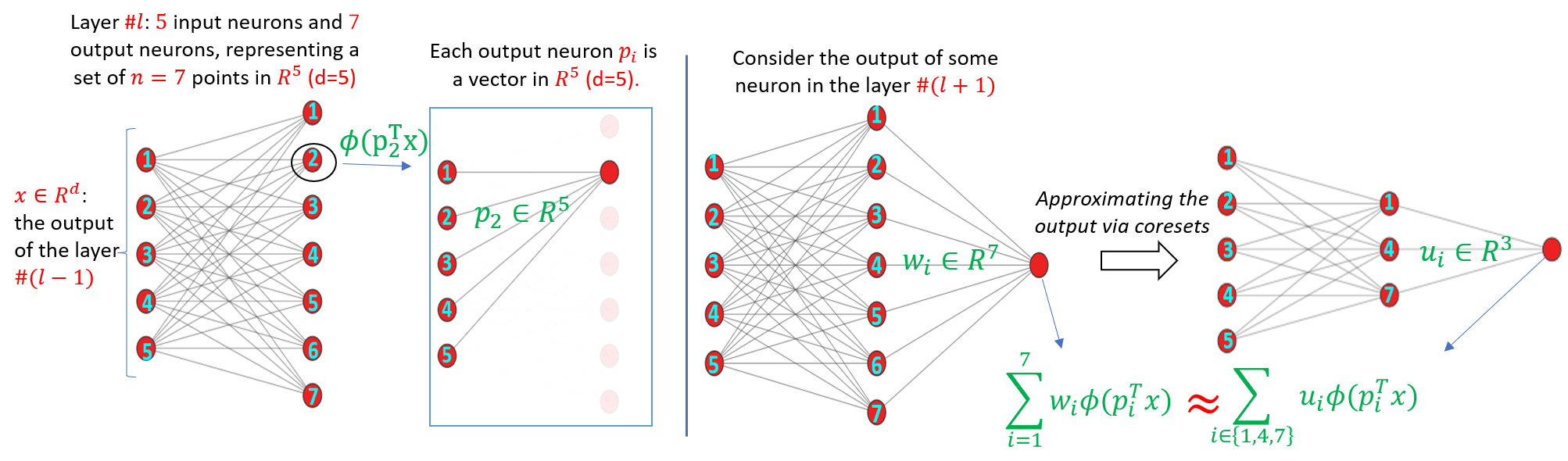}
    \caption{ Illustration of our neuron coreset construction on a toy example.}
    \label{fig:diagram}
\end{figure}

\textbf{Pruning via coresets.} Recently, some inspiring innovative frameworks~\cite{mussay2021data,liebenwein2020provable,baykal2018datadependent,bhatt2021} leveraged the idea of coresets for pruning DNNs. Any layer $\ell$ can be represented as a set $P=\br{p_1,\cdots,p_n}$ of $n>1$ points in $\REAL^d$, where $d$ is the number of input neurons, and $n$ in the number of output neurons, i.e., each point $p\in P$, represents a specific neuron using its $d$ weights (parameters). When the layer receives an input vector $x\in \REAL^d$, it outputs the vector $(\phi(p_1^Tx),\cdots,\phi(p_n^Tx))$, where $\phi:\REAL\to \REAL$ is an activation function which defines a non-linear mapping. 
Focusing on a single neuron $\eta$ in the layer that follows $\ell$, defined by its corresponding vector of weights $w=(w_1,\cdots,w_n)$, we set in the context of coresets, $w(p_i):=w_i$ for every $i\in \br{1,\cdots,n}$ - this is just a mapping from $p_i$ to $w_i$ to simplify the writing and reading. Note that the output of this neuron is $\sum_{p\in P} w(p)\phi\term{p^Tx}$. Assuming that we are given an $\eps$-coreset $(C,u)$ for the query space $\term{P,w,\REAL^d,\phi}$ where $C\subseteq P$, and $u:C\to \REAL$, we have that $(C,u)$ approximates the output of this specific neuron $\eta$ for every query using less parameters; see Figure~\ref{fig:diagram}. To formalize the stated above, we now define coresets in the context of activation functions.

\begin{definition}[Coreset for activation functions]
\label{def:coreset_activation}
Let $\eps\in(0,1)$, and let $\term{P,w,\REAL^d, \phi}$ be a query space. 
Then the pair $\term{C,u}$, is an $\eps$-coreset for $\term{P,w,\REAL^d, \phi}$ if (i) $C \subseteq P$, (ii) $u : C \to [0,\infty)$, and (iii) for every $x \in X$, $\abs{1 - \frac{\sum_{q \in C} u(q)\phi\term{q^Tx}}{\sum_{p \in P} w(p)\phi\term{p^Tx}}} \leq \eps.$
\end{definition}

Since $C$ is a subset of $P$, we can remove (assign zero to) all weights from $w$ that corresponds to points not chosen to be in $C$ from $P$, and replace the weights of the chosen points in $C$ with the new weights vector $u$; see Figure 1 in~\cite{mussay2021data} for a visual illustration. \textbf{To prune neurons,} we refer the reader to Section~\ref{sec:ext} as it is a simple extension. Prior work showed that such approaches successfully result in high compression rates across a wide range of networks and datasets, and even achieves SOTA performance on a verity of them.

The main (strong) advantage of the coreset approach over others was the provided provable theoretical guarantees on the tradeoff between the compression rate and the approximation error, which supports worse case scenarios. In addition, coresets play an important role in improving the generalization properties of the trained networks~\cite{baykal2018datadependent,mirzasoleiman2020coresets}. 

\textbf{Sensitivity sampling for constructing pruning coresets.} To compute such coresets, both~\cite{liebenwein2020provable,mussay2021data} utilised the known sensitivity sampling framework~\cite{braverman2016new,langberg2010universal}. In short the sensitivity of a point $p\in P$ in some query space $\term{P,w,X,\phi}$ corresponds to the importance of this point with respect to the query space at hand, and it is defined as $s(p) = \sup_{x\in X}\frac{w(p)\phi(p,x)}{\sum_{q\in P}w(q)\phi(q,x)}$ - where the denominator is not equal to zero. Once we bound these sensitivities, we can sample points (neurons) from $P$ according to sensitivity bounds, and re-weight the sampled points to obtain a coreset. The size of the sample is proportional to the sum of these bounds.
 See Section~\ref{detailesonsensitity} and Theorem~\ref{thm:braverman_coreset} for more details in the Appendix.




\subsection{Our contribution}
Prior coreset methods for pruning DNNs either (i) imposed restrictive assumptions both on the model's weights and inputs~\cite{mussay2021data}, i.e., the input set $P$ representing the neurons, and the query set $X$ which represents the inputs of the layer, are enclosed in a ball in $\REAL^d$ of radius $r_1$ and $r_2$, respectively, or (ii) the methods are data-dependent, i.e., use a mini-batch of the input set to measure the influence of each parameter on the loss  function~\cite{baykal2018datadependent,liebenwein2020provable}. 

To this end, in this work, we take coresets a step further into the realm of pruning by introducing a unified framework with provable guarantees for pruning DNNs (weights and neurons/filters) while minimally affecting the generalization error. Our main improvement is that our framework is simultaneously (i) \textbf{data-independent}, (ii) \textbf{requires a single assumption} on the model's weights, and (iii) \textbf{provably guarantees a multiplicative factor approximation}, which is favourable upon additive approximations; see Theorem~\ref{thm:relu_core}. 
The approach is based on the widely used theory of coresets allowing us to suggest a provable guarantee on the tradeoff between the approximation error and compression rate for each layer. 

We conducted experimental results which established new SOTA benchmarks for structured pruning via coresets across a wide range of networks and datasets. We share all of our resulted models~\cite{opencode}.


\section{Method}
\label{sec:method}
In general, the coreset (for pruning) technique hinges upon the insight that any linear layer such as convolutions, can be casted as a matrix multiplication~\cite{mussay2021data}. Hence, we focus in what follows on fully connected (FC) layers, while the details holds for any linear layer.
Furthermore, for simplicity, we assume in what follows that the weights of $P$ are all equal to $1$ and thus our query space is denote by $\term{P,\REAL^d, \phi}$, and the sensitivity of a point $p\in P$ is simply $s(p) = \sup_{x\in X}\frac{\phi(p,x)}{\sum_{q\in P}\phi(q,x)}$. Note that our proofs are easily extended to the general case where we are given a weight function $w:P\to \REAL$ as discussed in Section~\ref{sec:ext}.

\subsection{Preliminaries}
\label{sec:preliminaries}
\textbf{Notations.} For a positive integer $n$, we use $[n]$ to denote the set $\{1,\ldots,n\}$. For $c \in \REAL^d$ and a symmetric positive definite matrix $G \in \REAL^{d \times d}$, we define $E\term{G,c}:=\br{x \in \REAL^d \middle| \term{x-c}^T G \term{x-c} \leq 1}$ to be the ellipsoid defined by $c$ and $G$. For an ellipsoid $E\term{G,c}$, each endpoint of a semi principal axis is called a vertex of $E\term{G,c}$. We define $\mathrm{rank}\term{P}$ for any set $P \subseteq \REAL^d$ to be the dimension of the affine subspace that $P$ lies on. For a set $P\subset \REAL^d$ the convex hull of $P$ is denoted by $\conv{(P)}$. Finally, vectors are treated as column vectors.

\subsection{Novelty - L\"{o}wner ellipsoid meets Carath\'{e}odory}
\begin{figure}[t!]
    \centering
    \includegraphics[width=\linewidth]{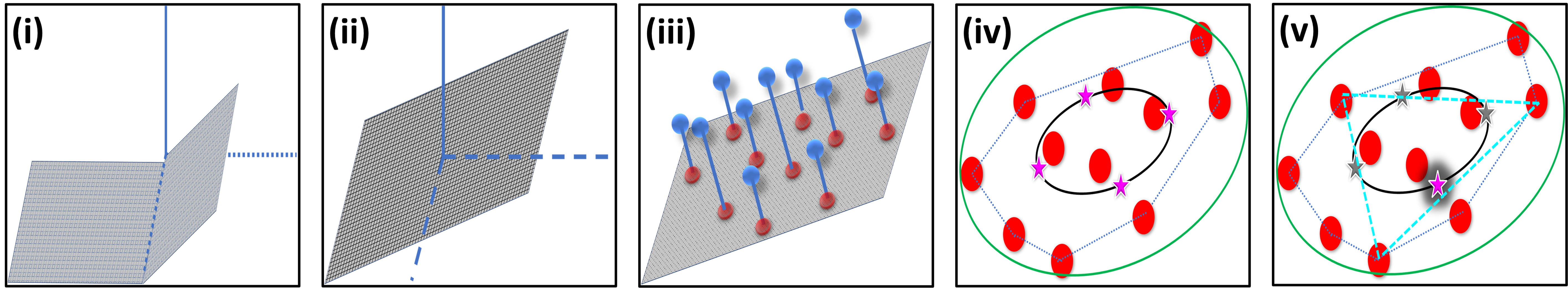}
    \caption{Our novelty in a nut-shell: The very first steps of our technique rely on bounding the ReLu activation function by the $\ell_1$-regression loss function, e.g., for $\relu{p^Tx}$, where $p=(1,0)$ in this example (shown in (i)), we first bound it by the $\ell_1$-regression loss function (shown in (ii)) using Definition~\ref{def:complexity_mes}. Following this step, a set of points $P$ can be projected into a low dimensional subspace of dimension $\mathrm{rank}\term{P}$ using any dimensionality reduction algorithm as presented in (iii), resulting in the set $P^\prime$. (iv) The convex hull (blue dashed lines) $P^\prime$ (red points) is enclosed by its L\"{o}wner ellipsoid (depicted in green). (v) Finally, for each vertex (magenta star) of the enclosed ellipsoid (black ellipsoid), its Carath\'{e}dory set is found (red points connected by cyan dashed lines).}
    \label{fig:my_label}
\end{figure}
Our method hinges upon a combination of two known tools from convex geometry. 
The novelty of our approach exploits the following observation. Most activation functions $\phi$ are continuous non-decreasing functions, which indicate that for every query $x$ and a set of points $P$, the maximal contribution to $\sum_{p\in P} \phi(p^Tx)$ with respect to such activation function is associated to a point on the convex hull of $P$. 
By finding a geometrical body $B$ of bounded number of vertices, that is (i) enclosed in $\conv\term{P}$ and (ii) with some dilation factor (expanding) $B$ enclose $\conv\term{P}$, we will be able to represent each point $p$ on the boundary of the convex hull of $P$ as a convex combination of two points $p_1,p_2$, one of each ($p_1$) on $B$ and the second ($p_2$) on its dilated form, which is formalized as the set $\left\lbrace \alpha (x - c) + c \mid x \in B, \forall \alpha \in [0, 1] \right\rbrace$, where $c$ here denotes the center of $B$. For such task, L\"{o}wner ellipsoid is leveraged. 

\begin{theorem}[John-L\"{o}wner ellipsoid\cite{John14}]
\label{thm:loewner:ellipsoid}
Let $L\subseteq\REAL^d$ be a set of points such that the convex hull of $L$ has a nonempty interior. Then, there exists an ellipsoid $E\term{G,c}$ (also known as the \emph{MVEE}), where $G\in\REAL^{d\times d}$ is a positive definite matrix and $c \in \REAL^d$, of minimal volume such that $\frac{1}{d} \term{E\term{G,c}-c}+c \subseteq \conv\term{L} \subseteq E\term{G,c}.$ 
If $L$ is symmetric around the center $c$, then the dilation factor can be reduced to $\frac{1}{\sqrt{d}}$.
\end{theorem}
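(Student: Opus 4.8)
The plan is to run the classical argument behind John's theorem. Set $K:=\conv\term{L}$; here $L$ is finite (in general one needs $L$ bounded for an enclosing ellipsoid to exist), so $K$ is a convex body --- compact with nonempty interior. Existence of a minimal-volume enclosing ellipsoid is a compactness fact: parametrizing enclosing ellipsoids by $\term{G,c}$ with $G\succ0$, the volume $\omega_d\det\term{G}^{-1/2}$ depends continuously on $\term{G,c}$, and since $K$ contains a small ball (nonempty interior) and is bounded, along any minimizing sequence the eigenvalues of $G$ stay in a fixed compact subinterval of $(0,\infty)$ and $c$ stays bounded, so a minimizer $E\term{G_0,c_0}$ exists; uniqueness (justifying ``the MVEE'') follows from the strict concavity of $G\mapsto\log\det G$ via a standard averaging argument. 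Since the MVEE, the two inclusions, and the symmetry hypothesis are all preserved by invertible affine maps of $\REAL^d$ (volumes scaling by a common factor), after applying $x\mapsto G_0^{1/2}\term{x-c_0}$ we may assume the MVEE is the Euclidean unit ball $B:=E\term{I_d,0}$, i.e.\ $K\subseteq B$; it then suffices to show $\frac1d B\subseteq K$ in general, and $\frac{1}{\sqrt d}B\subseteq K$ when $K$ is symmetric about the origin.

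The technical heart is John's contact-point characterization: there exist \emph{contact points} $u_1,\dots,u_m\in\partial B\cap K$ (hence $\|u_i\|=1$) and weights $\lambda_1,\dots,\lambda_m>0$ with $\sum_{i=1}^m\lambda_i\,u_iu_i^\top=I_d$ and $\sum_{i=1}^m\lambda_i\,u_i=0$. I would obtain these by contradiction via separation in $\mathrm{Sym}(d)\times\REAL^d$ (with the trace inner product on the first factor): the generating set $\br{\term{uu^\top,u}:u\in\partial B\cap K}$ is compact and its convex hull avoids the origin (each element has first-coordinate trace $1$), so the cone it generates is closed and convex; if $\term{I_d,0}$ lay outside this cone, a separating functional would give a symmetric $H$ with $\mathrm{tr}\term{H}>0$ and a vector $h$ with $u^\top Hu+h^\top u\le0$ for every contact point $u$. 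Then, for small $s>0$, the ellipsoid with quadratic form $I_d+sH$ and center $-\frac{s}{2}h$ would still contain $K$ (contact points move inward to first order, and non-contact points retain their slack by compactness of $K$) while having volume $\propto\det\term{I_d+sH}^{-1/2}=1-\frac{s}{2}\mathrm{tr}\term{H}+O\term{s^2}<1$, contradicting minimality; a standard second-order refinement handles contact inequalities that happen to be tight. Carath\'{e}odory's theorem lets one keep only $O\term{d^2}$ contact points, and taking traces in the first identity gives $\sum_i\lambda_i=d$. If $K$ is symmetric about the origin the contact points occur in antipodal pairs $\pm u_i$, so the second identity is automatic and only $\sum_i\lambda_iu_iu_i^\top=I_d$ is needed.

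Finally, I would derive the inner inclusion by testing supporting halfspaces. Let $H^-:=\br{x\in\REAL^d:\langle x,\theta\rangle\le b}$ with $\|\theta\|=1$ be any closed halfspace containing $K$; since $K$ is closed and convex it equals the intersection of all such halfspaces, so it suffices to show $b\ge\frac1d$. Put $t_i:=\langle u_i,\theta\rangle$; Cauchy--Schwarz gives $t_i\ge-1$, while $u_i\in K\subseteq H^-$ gives $t_i\le b$, so $t_i\in[-1,b]$ and therefore $q\term{t_i}\le0$ for $q\term{t}:=\term{t-b}\term{t+1}$. Weighting by the $\lambda_i$ and invoking the contact identities together with $\sum_i\lambda_i=d$,
\[
0\ \ge\ \sum_i\lambda_i q\term{t_i}\;=\;\sum_i\lambda_i t_i^2+\term{1-b}\sum_i\lambda_i t_i-b\sum_i\lambda_i\;=\;1+0-bd,
\]
because $\sum_i\lambda_i t_i^2=\theta^\top\term{\sum_i\lambda_iu_iu_i^\top}\theta=\theta^\top I_d\theta=1$ and $\sum_i\lambda_i t_i=\theta^\top\term{\sum_i\lambda_iu_i}=0$. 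Hence $b\ge\frac1d$, so $\frac1d B\subseteq K$. In the symmetric case, $H^-$ may be replaced by the slab $\br{\abs{\langle x,\theta\rangle}\le b}$, so that $\abs{t_i}\le b$ and directly $1=\sum_i\lambda_i t_i^2\le b^2\sum_i\lambda_i=b^2d$, giving $b\ge\frac{1}{\sqrt d}$ and $\frac{1}{\sqrt d}B\subseteq K$. Undoing the normalizing affine map turns $B$ into $E\term{G,c}$ and $\frac1d B$ into $\frac1d\term{E\term{G,c}-c}+c$ (and likewise with $\frac{1}{\sqrt d}$), which is exactly the asserted inclusion.

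The step I expect to be the main obstacle is the contact-point characterization. One must check that the separating functional genuinely encodes an admissible, strictly volume-decreasing perturbation of the ellipsoid --- this uses compactness of $K$ to push non-contact points away from $\partial B$ uniformly, plus a second-order analysis when some contact inequalities are tight --- and, crucially, in the non-symmetric case one must retain the first-moment condition $\sum_i\lambda_iu_i=0$, which is exactly what degrades the dilation factor to $\frac1d$ from the $\frac{1}{\sqrt d}$ available under symmetry.
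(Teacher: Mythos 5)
The paper does not prove this statement: it is quoted verbatim as a classical result of John (Theorem~\ref{thm:loewner:ellipsoid}, cited to \cite{John14}) and used as a black box. Your proposal is the standard textbook proof of that classical theorem --- normalize the MVEE to the unit ball, extract John's contact-point decomposition $\sum_i\lambda_i u_iu_i^\top=I_d$, $\sum_i\lambda_i u_i=0$, $\sum_i\lambda_i=d$, and test supporting halfspaces with the quadratic $(t-b)(t+1)$ --- and the computation $0\ge 1-bd$ (resp.\ $1\le b^2d$ in the symmetric case) is correct, as is the reduction of the symmetric case via uniqueness of the MVEE. The only part that remains a sketch rather than a proof is the one you flag yourself, the separation/perturbation argument establishing the contact-point identities, but that is exactly where the work lies in every published proof of this theorem, so there is no gap in the plan itself.
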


\begin{minipage}[t]{0.46\textwidth}
  \vspace{0pt}  
\begin{algorithm}[H]
\caption{$\infcoreset\term{P,m}$}
\label{alg:mainINF}
 \SetKwInOut{Input}{Input}
\SetKwInOut{Output}{Output}
\SetAlgoLined
\DontPrintSemicolon
\Input{$P\subseteq\mathbb{R}^d$ of $n$ points with rank $r$}
\Output{A subset $S \subseteq P$ that satisfies Lemma~\ref{lem:l1_reg}}
$(Y,z) := $ affine subspace that $P$ lies on, i.e. $P \subseteq \br{xYY^T + z \middle| x \in \REAL^d}$ \label{alg_inf:affine}\;
$P^\prime := \br{p^\prime := pY^T}$ \label{alg_inf:Pprime}\;
Let $\mathrm{map} : P^\prime \to P$ a function that maps every $p^\prime\in P^\prime$ to its corresponding point $p\in P$ \label{alg_inf:map}\;
$S := \emptyset; K := \emptyset$\;
$E\term{G,c} := \mvee\term{\conv\term{P^\prime}}$ \label{alg_inf:Ellipsoid}\;
$V := $ the vertices of ellipsoid $\frac{1}{r} \term{E\term{G,c} - c} + c$ \label{alg_inf:V}\;
\For{every $v \in V$\label{alg_inf:CaraStart}}{
$K := K \cup \cara\term{v, P^\prime}$ \label{alg_inf:Cara} \tcp{See Algorithm~\ref{alg:cara} in the supplementary material.}
}\label{alg_inf:CaraEnd}
$S := \br{\mathrm{map}\term{q} \middle| q \in K}$ \label{alg_inf:S}\;
\Return $S$
\end{algorithm}
\end{minipage}
\hfill\vline\hfill
\begin{minipage}[t]{0.46\textwidth}
\vspace{0pt}
\begin{algorithm}[H]
\caption{$\coreset\term{P,m}$}
\label{alg:main}
 \SetKwInOut{Input}{input}
\SetKwInOut{Output}{output}
\SetAlgoLined
\DontPrintSemicolon
\Input{A set $P\subseteq\mathbb{R}^d$ of $n$ points, and a sample size $m$}
\Output{ A weighted set $(C,u)$}
$Q := P$, $i := 1$, $C := \emptyset$\label{alg1:line_i}\;
\While{\label{line:main_while_loop}$\abs{Q} \ge 2\mathrm{rank}\term{Q}^2$}{
$S_i := \infcoreset\term{Q}$\label{alg1:Si}\;
\For{every $p \in S_i$ \label{alg1:sens_start}}{
$s(p) := \frac{2\mathrm{rank}\term{Q}^{1.5}}{i}$ \label{alg1:used_i}\;
} \label{alg1:sens_end}
$Q :=  Q\setminus S_i$, $i := i+1$ \label{alg1:line_advance_i}\;
}
\For{every $p \in Q$ }{
$s(p) := \frac{2\mathrm{rank}\term{Q}^{1.5}}{i}$\;
}
$t := \sum_{p\in P}s(p)$\;
$C:=$ an i.i.d sample of $m$ points from $P$, where each  $p\in P$ is sampled with probability $\frac{s(p)}{t}$. \label{alg1:start_coreset}\;
$u(p):= \frac{t}{m\cdot s(p)}$ for every $p\in C$ \label{alg1:end_coreset}\;
\Return $(C,u)$
\end{algorithm}
\end{minipage}

Afterwards, $p_1$ and $p_2$ should be represented by points from $P$. Each point on $B$ (specifically, $p_1$) can be represented via a convex combination of $d+1$ points from $P$. The same holds for points on the dilated form of $B$ (e.g., $p_2$) but via a conical combination (linear combination where the weights are non-negative and the sum of weights is not necessarily $1$). This problem is solved by invoking Carath\'{e}odory theorem.


\begin{theorem}[\cite{Caratheodory07,Steinitz13}]
\label{thm:cara}
For any $A\subset\mathbb{R}^d$ and $p\in\conv(A)$, there exists $m \le d+1$ points $p_1,\ldots,p_m\in A$ (denoted by a Carath\'{e}odory set of $p$) such that $p\in\conv\term{\br{p_1,\ldots, p_m}}$.
\end{theorem}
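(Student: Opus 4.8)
The plan is to give the classical dimension-counting proof of Carath\'{e}odory's theorem. First I would unwind the definition of the convex hull: since $p\in\conv(A)$, it admits a finite representation $p=\sum_{i=1}^{k}\lambda_i p_i$ with $p_1,\ldots,p_k\in A$, $\lambda_i>0$, and $\sum_{i=1}^{k}\lambda_i=1$. Among all such representations I would fix one that uses the fewest points, and call this number $k$; it then suffices to prove $k\le d+1$.

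Next, I would assume for contradiction that $k\ge d+2$. The $k-1$ vectors $p_2-p_1,\ldots,p_k-p_1$ lie in $\REAL^d$ and are more than $d$ in number, hence linearly dependent; extracting a nontrivial dependence among them and setting $\mu_1:=-\sum_{i=2}^{k}\mu_i$ produces scalars $\mu_1,\ldots,\mu_k$, not all zero, with $\sum_{i=1}^{k}\mu_i p_i=0$ and $\sum_{i=1}^{k}\mu_i=0$. Because these scalars sum to zero without all being zero, at least one is strictly positive, so the quantity $t:=\min\br{\lambda_i/\mu_i : \mu_i>0}$ is well defined; let $j$ be an index attaining the minimum.

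Finally I would perturb the representation along this direction: for this value of $t$ consider the coefficients $\lambda_i-t\mu_i$. They sum to $1$ (since $\sum_i\mu_i=0$), they are all nonnegative (for indices with $\mu_i>0$ by the choice of $t$, and for indices with $\mu_i\le 0$ trivially since $\lambda_i\ge 0$ and $t\ge 0$), and the $j$-th one vanishes. Hence $p=\sum_{i\ne j}(\lambda_i-t\mu_i)p_i$ is a convex combination of only $k-1$ points of $A$, contradicting minimality of $k$. Therefore $k\le d+1$, and $\br{p_1,\ldots,p_k}$ is the desired Carath\'{e}odory set. The argument is entirely elementary, so there is no real obstacle; the only step needing a little care is the bookkeeping at the end — checking nonnegativity of every perturbed coefficient and that the set defining $t$ is nonempty — and both follow immediately from the observations just made.
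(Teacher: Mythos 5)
Your proof is correct. Note that the paper does not prove this statement at all: Theorem~\ref{thm:cara} is imported as a classical result with citations to Carath\'{e}odory and Steinitz, so there is no in-paper argument to compare against. What you have written is the standard dimension-counting proof — take a minimal representation, use linear dependence of $k-1>d$ difference vectors to build an affine dependence $\sum_i\mu_i p_i=0$ with $\sum_i\mu_i=0$, and perturb by $t=\min\br{\lambda_i/\mu_i : \mu_i>0}$ to kill one coefficient while preserving nonnegativity and the unit sum — and every step, including the nonemptiness of the set defining $t$ and the nonnegativity bookkeeping, is handled correctly.
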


Finally, it is known that some functions, including the ReLU function, do not admit an $\eps$-coreset of size $o(n)$~\cite{mussay2021data,munteanu2018coresets}. Thus, we use a generalized form of what is known as the complexity measure of a set of points, which was first introduced in~\cite{munteanu2018coresets} and later leveraged in~\cite{mai2021coresets}. This measure is used to determine the complexity of a given set $P$ with respect to ReLU, and the coreset size theoretically.

\begin{definition}[Regression Complexity Measure]
\label{def:complexity_mes}
Let $P \subseteq \REAL^d \times \br{1}$, the regression complexity measure of $P$ is defined as $\mu\term{P} = \sup_{x \in \REAL^{d+1}} \frac{\sum_{q \in \br{p \in P \middle| p^Tx\leq 0}} \abs{q^Tx}}{\sum_{q \in \br{p \in P \middle| p^Tx > 0}} \abs{q^Tx}}$, where the denominator is $\geq 0$, and the last entry of every $p \in P$ is $1$, reserved for the bias/intercept term.
\end{definition}


\subsection{Our Pruning Scheme}
In what follows, we present our data summarization technique for ReLU on the dot product function.  Then in Section~\ref{sec:ext}, we discuss that our results can be easily extended to a wide family of activation functions including the Sigmoid function, as recently shown in~\cite{mai2021coresets}. 
First we present Algorithm~\ref{alg:mainINF}, which serves as a stepping stone towards bounding the sensitivities. 


\textbf{Overview of Algorithm~\ref{alg:mainINF}.} The algorithm receives as input a set $P \subset \REAL^d$ whose rank is $r \in [d]$ and deterministically finds a  subset $S \subseteq P$ which satisfies that for every $j \in [d]$, $X \in \REAL^{d \times j}$ and $v \in \REAL^d$, $\frac{\max_{p \in P} \norm{\term{p-v}X}_1}{\max_{p \in S} \norm{\term{p-v}X}_1} \leq r^{1.5}$. To do so, first, we find the affine hyperplane that $P$ lies on, followed by computing the low dimensional representation of $P$, denoted by $P^\prime$; see Lines~\ref{alg_inf:affine}--\ref{alg_inf:Pprime}. Note that if $\mathrm{rank}(P)=d$, then we can either keep $P$ as it is (i.e., $P':=P$), or use dimensionality reduction tricks as detailed in Section~\ref{sec:ext}. To compute the output $S$, we first bound the convex hull of $P^\prime$ by its L\"{o}wner ellipsoid $E(G,c)$ in Line~\ref{alg_inf:Ellipsoid}, followed by computing the dilated ellipsoid of $E(G,c)$, namely, $\frac{1}{r}\term{E - c} + c$. Let $V$ be the set of vertices of such ellipsoid; see Line~\ref{alg_inf:V}. Now, for each point $v \in V$, we represent it as a convex combination of $r+1$ points from $P^\prime$ via Theorem~\ref{thm:cara}, and store the union of such sets (each of size at most $r+1$) of points into $K$ as done in Lines~\ref{alg_inf:CaraStart}--\ref{alg_inf:CaraEnd}.
For each point in $K$, we map it back to $\REAL^d$ to satisfy Lemma~\ref{lem:l1_reg}. 
To sum up Algorithm~\ref{alg:mainINF}, we observe that the vertices can be used via canonical combinations with their dilated form to describe every point on the convex hull of the input data (in our end, it would the network's weights). Hence the Carath\'{e}odory set of these vertices from the input points lying on the convex hull can be further used to also represent points lying on the convex hull. This is the core idea which enable us in forming our $\ell_\infty$ coreset for any $\ell_\rho$ regression problem where $\rho \in (0, \infty)$.

We now discuss Algorithm~\ref{alg:main} which is responsible for constructing an  $\eps$-coreset with respect to activation functions. Its input is a set $P \subset \REAL^d$ and a sample size $m \geq 1$. 

\textbf{Overview of Algorithm~\ref{alg:main}.} First set $Q := P$. At each iteration $i$, the algorithm obtains a subset $S_i \subseteq Q$ as an output to a call to $\infcoreset(Q)$ as stated in Line~\ref{alg1:Si} of Algorithm~\ref{alg:mainINF} such that for every $x \in \REAL^d$, it holds that 
$\frac{\max_{p \in P} \abs{p^Tx}}{\max_{p \in S_i} \abs{p^Tx}} \leq r^{1.5}$ with $r$ being the rank of $Q$. The sensitivity of each point in $S_i$ is bounded from above by $\frac{2d^{1
.5}}{i}$ as stated in Lines~\ref{alg1:sens_start}--\ref{alg1:sens_end}.
The idea behind these bounds lies in our proof of Theorem~\ref{thm:relu_core}. The set $S_i$ is removed from $Q$, and this procedure is repeated with respect to $Q$ until the size of $Q$ is small enough. The obtained sensitivities are the ones needed for computing the pruning coresets. Finally, we utilize the sensitivity sampling framework of~\cite{braverman2016new} to obtain the desired coreset; see Lines~\ref{alg1:start_coreset}--\ref{alg1:end_coreset}.

\subsection{Analysis}\label{sec:analysis}
In this section, we prove the correctness of our algorithms. 
The following lemma shows that for each point $p$ that is inside some convex hull $S$, its $\ell_1$ distance to any affine subspace is always bounded from above by $\ell_1$ distance from the same affine subspace, of some other point $q \in S$.
\begin{restatable}{lemma}{lemconvineq}
\label{lem:conv:ineq}
Let $d,\ell,m \geq 1$ be integers. Let $p \in \REAL^d$ and $A \subseteq\REAL^d$ be a set of $m$ points with $p \in \conv(A)$ so that there exists $\alpha:A \to [0,1]$ such that $\sum_{q \in A} \alpha(q)=1$ and $\sum_{q\in A}\alpha(q) \cdot q=p$.
Then for every $ Y \in \REAL^{d \times \ell}$ and $v \in \REAL^{\ell}$, $\norm{(p - v)Y}_1 \leq \max_{q \in A}\norm{\term{q - v}Y}_1$.
\end{restatable}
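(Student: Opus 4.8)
The plan is to recognize that the map $q\mapsto \norm{\term{q-v}Y}_1$ is a convex function of $q$, being the composition of the affine map $q\mapsto \term{q-v}Y$ with the convex norm $\norm{\cdot}_1$, and then to apply Jensen's inequality (equivalently, the triangle inequality) along the convex combination $p=\sum_{q\in A}\alpha(q)q$ that witnesses $p\in\conv(A)$.

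Concretely, first I would use $\sum_{q\in A}\alpha(q)=1$ to absorb the shift $v$ into the combination, writing $v=\sum_{q\in A}\alpha(q)v$, so that
\[
\term{p-v}Y=\left(\sum_{q\in A}\alpha(q)\term{q-v}\right)Y=\sum_{q\in A}\alpha(q)\term{q-v}Y .
\]
Then, by the triangle inequality for $\norm{\cdot}_1$ together with absolute homogeneity and $\alpha(q)\ge 0$,
\[
\norm{\term{p-v}Y}_1\le \sum_{q\in A}\alpha(q)\,\norm{\term{q-v}Y}_1 .
\]
Finally, bounding each summand by $\max_{q'\in A}\norm{\term{q'-v}Y}_1$ and using $\sum_{q\in A}\alpha(q)=1$ once more yields
\[
\norm{\term{p-v}Y}_1\le \left(\sum_{q\in A}\alpha(q)\right)\max_{q'\in A}\norm{\term{q'-v}Y}_1=\max_{q'\in A}\norm{\term{q'-v}Y}_1 ,
\]
which is exactly the claimed inequality.

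There is no genuinely hard step here: the lemma is just the statement that norms are convex and invariant in form under affine reparametrization, evaluated over a simplex inside $\conv(A)$. The only points that require a little care are (i) invoking $\sum_{q\in A}\alpha(q)=1$ to pull $v$ inside the combination \emph{before} using homogeneity, since otherwise an unwanted scalar factor would appear, and (ii) keeping the row/column-vector conventions consistent so that the product $\term{p-v}Y$ with $Y\in\REAL^{d\times\ell}$ is well-typed. Since $A$ is finite the maximum is attained, so no compactness or limiting argument is needed, and the identical proof goes through verbatim with $\norm{\cdot}_1$ replaced by any norm.
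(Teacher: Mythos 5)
Your proof is correct and follows essentially the same route as the paper's: absorb $v$ into the convex combination using $\sum_{q\in A}\alpha(q)=1$, apply the triangle (Jensen) inequality for $\norm{\cdot}_1$, and bound the resulting convex combination by the maximum. If anything, your write-up is slightly more careful about matching the statement's $(p-v)Y$ convention than the paper's own proof, which silently switches to the form $\norm{p^TY-v}_1$.
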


The following states the provable guarantees of Algorithm~\ref{alg:mainINF}.
\begin{restatable}[$\ell_\infty$-coreset for $\ell_1$-regression]{lemma}{linflonereg}
\label{lem:l1_reg}
Let $P \subseteq \REAL^d$ be a set of points, and $r$ be the rank of $P$. Let $j \in [d-1]$ and let $S$ be the output of a call to $\infcoreset(P)$. Then
\begin{enumerate*}[label=(\roman*)]
    \item $\abs{S} \in O\term{r^2}$, and \label{clm:l_inf_1} 
    \item for every $X \in \REAL^{d \times j}$ and $v \in \REAL^d$, $\frac{\max_{q \in P} \norm{\term{q - v}X}_1}{\max_{q \in S} \norm{\term{q - v}X}_1} \in \left[1, 2r^{1.5}\right]$. \label{clm:l_inf_2}
\end{enumerate*}
\end{restatable}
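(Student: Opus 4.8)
\emph{Proof plan.}

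\textbf{Part~\ref{clm:l_inf_1}} is a direct count. By Line~\ref{alg_inf:S} we have $\abs{S}\le\abs{K}$, and $K=\bigcup_{v\in V}\cara\term{v,P^\prime}$, where by Line~\ref{alg_inf:V} the set $V$ consists of the vertices of an ellipsoid in the $r$-dimensional space carrying $P^\prime$, so $\abs{V}=2r$, while Theorem~\ref{thm:cara} (applied in that $r$-dimensional space) gives $\abs{\cara\term{v,P^\prime}}\le r+1$ for each $v$. Hence $\abs{S}\le 2r\term{r+1}\in O\term{r^2}$. For Part~\ref{clm:l_inf_2}, fix $j$, $X\in\REAL^{d\times j}$, $v\in\REAL^d$, and abbreviate $F\term{q}:=\norm{\term{q-v}X}_1$; the lower bound $\frac{\max_{q\in P}F\term{q}}{\max_{q\in S}F\term{q}}\ge 1$ is immediate from $S\subseteq P$. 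For the upper bound, recall from Lines~\ref{alg_inf:affine}--\ref{alg_inf:map} that $P^\prime$ is the image of $P$ under the affine bijection from the affine hull of $P$ onto $\REAL^r$ inverse to $p\mapsto pY^T$; since $P$ has rank $r$ it affinely spans that hull, so $\conv\term{P^\prime}$ has nonempty interior in $\REAL^r$ and Theorem~\ref{thm:loewner:ellipsoid} applies in Line~\ref{alg_inf:Ellipsoid}. Writing $E:=E\term{G,c}$ and $E^\prime:=\frac{1}{r}\term{E-c}+c$, we get $E^\prime\subseteq\conv\term{P^\prime}\subseteq E$ and $E=r\term{E^\prime-c}+c$.

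The geometric core is the following claim about the vertex set $V$ of $E^\prime$: \emph{every $y\in E^\prime$ satisfies $\frac{1}{\sqrt r}\term{y-c}+c\in\conv\term{V}$.} To see this, diagonalize the quadratic form defining $E^\prime$; writing $a_i$ and $u_i$ for the $i$-th semi-axis length and direction and $\xi_i$ for the eigen-coordinates of $y-c$, membership $y\in E^\prime$ reads $\sum_i \xi_i^2/a_i^2\le 1$, while $\conv\term{V}=\br{c+\sum_i\delta_i a_i u_i\ \middle|\ \sum_i\abs{\delta_i}\le 1}$, so the desired membership is $\frac{1}{\sqrt r}\sum_i\abs{\xi_i}/a_i\le 1$, which follows from $\sum_i \xi_i^2/a_i^2\le 1$ by Cauchy--Schwarz. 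Two consequences: $\conv\term{V}$ is centrally symmetric about $c$ (the vertices come in $\pm$ pairs), so in particular $c\in\conv\term{V}$; and $\conv\term{V}\subseteq\conv\term{K}$, because each $v\in V\subseteq E^\prime\subseteq\conv\term{P^\prime}$ lies in $\conv\term{\cara\term{v,P^\prime}}\subseteq\conv\term{K}$ by Theorem~\ref{thm:cara}.

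Now let $p^*\in P$ attain $\max_{q\in P}F\term{q}$ and set $p^{\prime *}:=\mathrm{map}^{-1}\term{p^*}\in P^\prime\subseteq\conv\term{P^\prime}\subseteq E$ and $c_d:=\mathrm{map}\term{c}$. The map $\mathrm{map}$ is affine, hence it sends convex hulls to convex hulls and commutes with dilations: $\mathrm{map}\term{\alpha\term{q-c}+c}=\alpha\term{\mathrm{map}\term{q}-c_d}+c_d$. If $p^{\prime *}=c$ then $p^*=c_d\in\conv\term{S}$ (since $c\in\conv\term{V}\subseteq\conv\term{K}$ and $S=\mathrm{map}\term{K}$), and Lemma~\ref{lem:conv:ineq} gives $F\term{p^*}\le\max_{q\in S}F\term{q}$. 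Otherwise, let $\hat u$ be the direction of $p^{\prime *}-c$ and $t_1>0$ the value with $c+t_1\hat u\in\partial E^\prime$; by the previous paragraph $q_1:=c+\frac{t_1}{\sqrt r}\hat u\in\conv\term{V}\subseteq\conv\term{K}$. Since $p^{\prime *}\in E=r\term{E^\prime-c}+c$, we have $p^{\prime *}=c+t^*\hat u$ with $t^*\le r t_1$, so $p^{\prime *}=\nu\term{q_1-c}+c$ with $\nu:=t^*\sqrt r/t_1\in[0,r^{1.5}]$. Put $q_1^d:=\mathrm{map}\term{q_1}\in\conv\term{S}$ and $w:=2c_d-q_1^d=\mathrm{map}\term{2c-q_1}\in\conv\term{S}$ (using central symmetry of $\conv\term{V}$); applying $\mathrm{map}$ to $p^{\prime *}=\nu\term{q_1-c}+c$ and substituting $c_d=\frac12\term{q_1^d+w}$ yields the affine combination $p^*=\frac{1+\nu}{2}q_1^d+\frac{1-\nu}{2}w$, so $p^*-v=\frac{1+\nu}{2}\term{q_1^d-v}+\frac{1-\nu}{2}\term{w-v}$. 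The triangle inequality for $\norm{\cdot}_1$ followed by Lemma~\ref{lem:conv:ineq} (applied to $S$ and to the points $q_1^d,w\in\conv\term{S}$) gives
\[
F\term{p^*}\le\frac{1+\nu}{2}F\term{q_1^d}+\frac{\abs{1-\nu}}{2}F\term{w}\le\frac{\term{1+\nu}+\abs{1-\nu}}{2}\max_{q\in S}F\term{q}=\max\term{1,\nu}\,\max_{q\in S}F\term{q}\le 2r^{1.5}\max_{q\in S}F\term{q},
\]
which is the claimed upper bound (in fact with constant $r^{1.5}$).

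\textbf{Main obstacle.} The counting and the triangle-inequality estimates are routine; the delicate point is the geometric claim of the second paragraph --- extracting the factor $\sqrt r$ relating the inner John--L\"owner ellipsoid $E^\prime$ to the cross-polytope $\conv\term{V}$ of its vertices (via the $\ell_1$-versus-$\ell_2$ Cauchy--Schwarz estimate), and composing it with the factor $r$ from the containment $\conv\term{P^\prime}\subseteq E=r\term{E^\prime-c}+c$ --- together with the bookkeeping that the affine projection $P\mapsto P^\prime$ preserves exactly what is needed (affine rank, so the MVEE exists; dilations about the center; and convex hulls), and with the edge case $\nu<1$, where $p^*$ already lies in $\conv\term{S}$.
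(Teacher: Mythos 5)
Your proof is correct and follows essentially the same route as the paper's: project to the affine hull, sandwich $\conv\term{P^\prime}$ between its L\"{o}wner ellipsoid and the cross-polytope of the (scaled) ellipsoid vertices via the Cauchy--Schwarz $\sqrt{r}$ estimate, pass to the Carath\'{e}odory sets $K$, and finish with Lemma~\ref{lem:conv:ineq}. The only difference is in the final decomposition --- you write $p^*$ as an affine combination of two points of $\conv\term{S}$ using the central symmetry of $\conv\term{V}$ about $c$, whereas the paper writes $p^\prime$ as a convex combination of a point $q\in\conv\term{V}$ and its dilation $r^{1.5}\term{q-c}+c$ and then splits the dilated term by the triangle inequality around $c$ --- which incidentally yields the marginally sharper constant $r^{1.5}$ in place of the paper's $2r^{1.5}$, still consistent with the stated bound.
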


The following theorem states our main result.

\begin{theorem}[ReLU $\eps$-coreset]
\label{thm:relu_core}
Let $P \subseteq \REAL^d$, $\eps,\delta \in (0,1)$, $r =\mathrm{rank}\term{P}$, $\mu\term{P}$ be as in Definition~\ref{def:complexity_mes}, and let $m \in O\term{\frac{\mu\term{P} r^{3.5}\log{n}}{\eps^2} \term{d\term{\log\term{\mu\term{P} r\log{n}}} + \log\term{\frac{1}{\delta}}}}$. Let $(C,w)$ be the output of a call to $\coreset(P,m)$; see Algorithm~\ref{alg:main}. Then, with probability at least $1 - \delta$, $\term{C,w}$ is an $\eps$-coreset for $\term{P, \REAL^d, \relu{\cdot}}$; see Definition~\ref{def:coreset_activation}. 
\end{theorem}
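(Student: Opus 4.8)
The plan is to reduce the coreset guarantee for $\relu{\cdot}$ to a sensitivity bound fed into the standard sensitivity-sampling theorem (Theorem~\ref{thm:braverman_coreset} in the appendix). The first step is to show that the sensitivity $s(p)=\sup_{x\in\REAL^d}\frac{\relu{p^Tx}}{\sum_{q\in P}\relu{q^Tx}}$ of each point $p\in P$ is bounded above by the value $\frac{2r^{1.5}}{i}$ assigned to it in Algorithm~\ref{alg:main}, where $i$ is the iteration at which $p$ was peeled off. The key tool here is Lemma~\ref{lem:l1_reg}: at iteration $i$, for any query $x$, $\max_{p\in S_i}\abs{p^Tx}\ge \frac{1}{2r_i^{1.5}}\max_{p\in Q_i}\abs{p^Tx}$, so there is a witness in $S_i$ whose $\abs{p^Tx}$ is comparable to the largest over the remaining set. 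Because $\phi=\relu{\cdot}$ is nonnegative and $\relu{t}=\max\{t,0\}$, I would relate $\sum_{q\in P}\relu{q^Tx}$ to the positive part of the mass, and use the regression complexity measure $\mu(P)$ from Definition~\ref{def:complexity_mes} to control the ratio between the negative-side mass and positive-side mass: this is exactly what prevents the denominator from being made arbitrarily small relative to a single $\relu{p^Tx}$. Concretely, $\sum_{q\in P}\relu{q^Tx}\ge \frac{1}{1+\mu(P)}\sum_{q}\abs{q^Tx}$ on the positive side, and combining with the $\ell_\infty$-type guarantee of Lemma~\ref{lem:l1_reg} (applied with $j=1$, $X=x$, $v=0$) gives that a point peeled at iteration $i$ has at least $i$ points removed before it with comparable $\abs{\cdot^Tx}$ values, yielding the $1/i$ factor and hence $s(p)\le \frac{2r^{1.5}}{i}$ (up to the $\mu(P)$ factor, which will reappear in the sample-size bound, not the sensitivity itself — I would absorb it carefully, most likely into the total sensitivity $t$).

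The second step is to bound the total sensitivity $t=\sum_{p\in P}s(p)$. Since $\abs{S_i}\in O(r^2)$ for every $i$ by Lemma~\ref{lem:l1_reg}\ref{clm:l_inf_1}, and each point in $S_i$ contributes $\frac{2r^{1.5}}{i}$, we get $t\le \sum_{i=1}^{O(n/r^2)} O(r^2)\cdot\frac{2r^{1.5}}{i}\in O\big(r^{3.5}\log n\big)$ via the harmonic sum. Folding in the $\mu(P)$ factor from the $\relu{\cdot}$-to-$\ell_1$ reduction gives total sensitivity $O(\mu(P)\, r^{3.5}\log n)$, which matches the leading factor in the claimed $m$.

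The third step is bookkeeping: invoke the sensitivity-sampling framework of~\cite{braverman2016new} (Theorem~\ref{thm:braverman_coreset}). That theorem says that sampling $m$ points i.i.d.\ proportionally to the sensitivity bounds and reweighting by $u(p)=\frac{t}{m\,s(p)}$ yields, with probability $\ge 1-\delta$, an $\eps$-coreset provided $m\in O\big(\frac{t}{\eps^2}(\mathrm{VCdim}\cdot\log t + \log\frac1\delta)\big)$, where the VC/pseudo-dimension of the relevant range space for the query family $\{x\mapsto \relu{p^Tx}\}$ is $O(d)$ (halfspaces composed with a monotone function in $\REAL^d$). Plugging $t\in O(\mu(P)r^{3.5}\log n)$ and $\mathrm{VCdim}\in O(d)$ reproduces exactly the stated bound $m\in O\big(\frac{\mu(P)r^{3.5}\log n}{\eps^2}(d\log(\mu(P)r\log n)+\log\frac1\delta)\big)$. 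It remains to note the query space has weights $w\equiv 1$ by the simplifying assumption of Section~\ref{sec:method}, and the general weighted case follows as in Section~\ref{sec:ext}.

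The main obstacle I expect is the first step — turning the purely multiplicative $\ell_\infty$/$\ell_1$-regression guarantee of Lemma~\ref{lem:l1_reg} into a genuine per-point sensitivity bound for $\relu{\cdot}$. The subtlety is that $\relu{p^Tx}$ in the numerator can be nonzero while many $\relu{q^Tx}$ in the denominator vanish (those $q$ with $q^Tx\le 0$); this is precisely why a coreset of size $o(n)$ is impossible without a complexity measure, and why $\mu(P)$ must enter. Making the reduction from $\relu{\cdot}$ to $\abs{\cdot}$ (or to $\ell_1$ regression on $P\subseteq\REAL^d\times\{1\}$) quantitatively tight, and correctly tracking where $\mu(P)$ lands versus the $r^{1.5}$ and $1/i$ factors, is the delicate part; once that is in place, the total-sensitivity sum and the appeal to~\cite{braverman2016new} are routine.
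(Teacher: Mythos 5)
Your proposal matches the paper's proof essentially step for step: the same reduction of $\sum_{q\in P}\relu{q^Tx}$ to $\frac{1}{1+\mu(P)}\sum_{q\in P}\abs{q^Tx}$ via the complexity measure, the same peeling argument using Lemma~\ref{lem:l1_reg} to obtain the per-point bound $\frac{2r^{1.5}}{i}$ (with the $1+\mu(P)$ factor carried along), the same harmonic-sum bound $t\in O(\mu(P)r^{3.5}\log n)$ on the total sensitivity, and the same final appeal to Theorem~\ref{thm:braverman_coreset}. The delicate point you flag at the end is exactly the one the paper resolves, and it resolves it the way you describe, so no further work is needed.
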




\textbf{Time analysis.} Letting $r$ be the rank of $P$, the time complexity of Algorithm~{1} can be dissected to two main parts: (i) Computing the L\"{o}wner ellipsoid in $O(nr^2\log{n})$ time using the method proposed in~\cite{todd2007khachiyan} and (ii) computing the Carath\'{e}dory set in $O(nr + r^4\log{n})$ time via~\cite{maalouf2019fast}. Since $V$ can contain up to $O(r^2)$ points, the overall time for Algorithm~\ref{alg:mainINF} is $O(nr^2\log{n} + nr^3 + r^6\log{n})$. As for Algorithm~\ref{alg:main}, it takes $O(n^2r^2\log{n} + nr^4\log{n})$. Indeed, as explained in Section~\ref{sec:ext}, a dimensionality reduction algorithm may be applied to improve the run time (reducing the $r^6$ factor). Furthermore, the run time of our algorithm can be improved, using the merge-and-reduce tree from the literature of coresets to reduce the $n^2$ terms to $n\log{n}$, i.e., the running time can be reduced to $O(n\log^2(n)r^2 + nr^4)$. For a data-independent provable method, this running time is reasonable. 

\textbf{Our advantages over previous results.} Our coreset supports different activation functions without the need to change the sensitivity that much. Specifically, it will only be multiplied by some scalar, unlike previous coresets where different losses impose drastically different sensitivities/leverage scores and algorithms. This is since our coreset unlike other coresets is in its essence a framework of coresets for different $\ell_\rho$ losses, as it can be used as is for different $\ell_\rho$ losses and yet still attain $\epsilon$-approximation. In addition, when the rank of the input points is small, then our method outperforms previous methods.If the input data is of full rank, previous methods obtain faster coresets construction.

\textbf{On the boundness of the regression complexity measure.} First of all, there exists an example where the complexity measure is unbounded, e.g., consider a set of points distributed evenly on a unit ball. In this case, you can always find a point where a hyperplane separating it from the rest of the points can be found such that the one half-space of this hyperplane contains only this point while the other half-space contains the rest of the points. This leads to an infinite complexity measure. Such an example is also mentioned in~\cite{mussay2021data}, when assessing the hardness of generating multiplicative-approximation coresets for ReLU functions.

Theoretically, the complexity measure is influenced by how free can the bias term be (the last entry of x); see Definition~\ref{def:complexity_mes}. This term is the only thing that can ensure that one point can be separated from the rest in the sense of finding a separating hyperplane, leading to an infinite complexity measure. Bounding on this term, leads to bounded complexity measure from a theoretical point of view.

In the context of model pruning, from the perspective of the complexity measure, the model's weights are the input denoted by a matrix $P \in \REAL^{n \times (d+1)}$, while the query is now $\REAL^d \times \br{1}$. Thus the complexity measure is now $\mu\term{P} := \sup_{x \in \REAL^d \times \br{1}} \frac{\sum_{q \in \br{p \in P \middle| p^Tx \leq 0}} \abs{q^Tx}}{\sum_{q \in \br{p \in P \middle| p^Tx > 0}} \abs{q^Tx}}$. With this in mind, we observe that the complexity measure is now an instance of the complexity measure used in~\cite{mai2021coresets}. The complexity measure now relies entirely on the structure of the model's weights, where the goal is to find the largest ratio between the sum of the absolute of the values inside the rectified neurons prior to applying the rectification, and the sum values of non-rectified neurons. To bound this measure, we can use a variant of the algorithm described in the proof of Theorem 3 in~\cite{munteanu2018coresets}.

\subsection{Extensions}
\label{sec:ext}
Our suggested scheme can be extended to support many other variants of the pruning problem.

\textbf{Various activation functions.} 
Our result can be extended to a family of activation functions called \say{Nice hinge functions}; see Definition~\ref{def:nice_hinge}. Let $(P,X,\phi)$ be a query space,  where $\phi$ is a \say{Nice hinge functions}. 
To bound the sensitivity of a point $p$, we first bound the nominator of $s(p)$ by proving that $\forall x\in X:\phi(p^Tx)<\abs{p^Tx}$. For bounding the denominator from below, recently~\cite{mai2021coresets} proved that $\forall x\in X: \sum_{p\in P}\relu{p^Tx} \lessapprox \sum_{p\in P}\phi{(p^Tx)}$; see full detail in Section~\ref{sec:ext_other_activations}.

\textbf{Weighted Input.} In the context of deep learning, the output of each neuron is multiplied with a scalar which brings the necessity of having the ability to deal with weighted set of points. Algorithm~\ref{alg:main} can be extended easily to the case as generously detailed in Section~\ref{sec:ext_appendix_weighted} in the Appendix.  

\textbf{Dimensionality reduction.} All coreset-based pruning methods rely heavily on the dimensionality of the model's layers, as well as our method. To ensure sufficient pruning ratio, we apply either PCA, TSNE, MDS, or the JL transform on the weights of each layer prior to generating its coreset. 

\textbf{From weight to neuron pruning.} Most coreset-based pruning methods, e.g.,~\cite{baykal2018datadependent, mussay2021data}, first provide a scheme for (provable) weight pruning, which is then used as a stepping stone towards pruning neurons as follows.~\cite{baykal2018datadependent} first suggested coreset-based neuron pruning via the use of a generated controled set of queries to evaluate the importance of weights. Any neuron that has a maximal activation value lower or equal to zero, will be pruned from the network as its impact on the rest of the neurons is minimal. On the other hand~\cite{mussay2021data} altered the definition of sensitivity such that it takes into account the sensitivity of a neuron in a layer $\ell$ with respect to all the neurons in the layer $\ell+1$, which basically means that the sensitivity of each neuron is taken be the maximal sensitivity over every weight function (neuron in the next layer) defined by the layer. Hence, we follow the same logic for such method; see Section~\ref{ext:nuronp} in the supplementary material.


\textbf{From neuron to filter pruning.} Convolutions can be expressed as matrix multiplications, which enables our method to prune filters from any model as done by~\cite{liebenwein2020provable,mussay2021data,liebenwein2021compressing}. 

\section{Experimental Results}


In this section, we study various widely used network architectures and benchmark data-sets. Following~\cite{mussay2021data}, to test the robustness of our methods on each of the neuron and filter pruning tasks independently, two sets of experiments are conducted. The first focuses on pruning neurons (Section~\ref{sec:nueronpruning}) whereas the second focuses on pruning filters (Section~\ref{sec:filterpruning}), both via our coreset method. 

\textbf{The setting.} In all experiments we report the \emph{Pruning ratio} -- the percentage of the parameters that were removed from the original mode. Here, \emph{PR} stands for pruning ratio, \emph{FR} stands for floating-point reduction ratio and \emph{Err} -- the percentage of misclassified test instances of our method compared to coreset-based pruning methods and more. \emph{Baseline Err} is the error of the original uncompressed network, while  \emph{Pruned Err} is the classification error of the compressed model. In our experiment we compress and fine-tune the network once, no iterative pruning was applied, thus, the compared methods also satisfy this setting. Each experiment was conducted $5$ times, in the tables, we report for our method the best error achieved and we highlight in parentheses next to it the average error and standard deviation across the $5$ trails. In all of our experiments, the models are fine-tuned till convergence (after pruning). Implementation details are given in Section~\ref{sec:implementation_det} in the Appendix.

\textbf{Software/Hardware.} Our algorithms were implemented in Python 3.6~\cite{10.5555/1593511} using {Numpy}~\cite{oliphant2006guide}, and Pytorch~\cite{paszke2019pytorch}. Tests were performed on NVIDIA DGX A100 servers with 8 NVIDIA A100 GPUs each, fast InfiniBand interconnect and supporting infrastructure.

\textbf{Baselines.}  Our results are compared to (i) PFP ~\cite{liebenwein2020provable}, (ii) FT~\cite{li2016pruning}, (iii) SoftNet~\cite{hesoft}, (iv) ThiNet~\cite{luo2017thinet}, and (v) PvC~\cite{mussay2021data}, (vi) Soft Pruning~\cite{hesoft},  (vii) CCP~\cite{peng2019collaborative}, (viii) FPGM~\cite{he2019filter}, (ix) ThiNet-70, (x) ThiNet-50~\cite{luo2017thinet}, (xi) Pruning via Coresets (PvC)~\cite{mussay2021data}, (xii) Pruning from Scratch (PfS)~\cite{wang2020pruning}, and (xiii) Rethinking the value of network pruning (Rethink)~\cite{liu2018rethinking}.



\subsection{Neuron Pruning}\label{sec:nueronpruning}



We test our method on VGG16~\cite{simonyan2014very} using CIFAR10~\cite{krizhevsky2009learning}, and LeNet300-100 using MNIST~\cite{lecun1998gradient}. 
\begin{table}[htb!]
\caption{Pruning of LeNet-$300$-$100$ architecture on the MNIST dataset and of VGG-16 on the CIFAR-10 dataset. Here, we report the compression with respect to the fully connected layers only.}
\begin{center}
\adjustbox{max width=\linewidth}{
\centering
\begin{tabular}{c||c|c c c} 
\hline
Model & Method &  Baseline Err. (\%) &
Pruned Err. (\%) & PR (\%) \\ 
\hline
 \multirow{6}{2cm}{\centering LeNet-300-100} & PFP & $1.59$ & $2.00$ & $84.32$ \\
 & FT & $1.59$ & $1.94$ & $81.68$ \\
 & SoftNet & $1.59$ & $2.00$ & $81.69$ \\
  & ThiNet & $1.59$ & $12.17$ & $75.01$ \\
 & PvC & $2.16$ & $2.03$ & $90$ \\
 & \textbf{Our method ($90$)} & $\mathbf{2.07}$ & $\mathbf{1.98 \;  (2.02 \pm 0.04)}$ & $\mathbf{90}$ \\
 & \textbf{Our method ($92.6$)} & $\mathbf{2.07}$ & $\mathbf{2.64 \;  (2.74 \pm 0.1)}$ & $\mathbf{92.6}$ \\
  & \textbf{Our method ($94.6$)} & $\mathbf{2.07}$ & $\mathbf{3.51 \;  (3.58 \pm 0.07)}$ & $\mathbf{94.6}$ \\
 \hline 
 \multirow{2}{2cm}{\centering VGG-16} & PvC & $8.95$ & $8.16$ & $75$ \\
 & \textbf{Our method} & $\mathbf{5.9}$ & $\mathbf{5.9 \; (6.2 \pm 0.3)}$ & $\mathbf{90}$ \\
 \hline
\end{tabular}
}
\end{center}
\label{tab:neuronPruning}
\end{table}

\textbf{Discussion.} Table~\ref{tab:neuronPruning} present the results of LeNet$300$-$100$ and VGG$16$. Observe that in both architectures, our method outperformed the competing methods under the same compression scenarios. For example, we pruned roughly $90\%$ of the parameters of the LeNet-300-100 model while improving the accuracy of the original model. We witness a similar phenomena on the VGG16 model, where we pruned roughly $90\%$ of the parameters of the dense layers resulting in accuracy improvement. 
This confirms the insights in~\cite{baykal2018datadependent} that coresets help in improving the generalization properties of DNNs.

\subsection{Filter pruning}\label{sec:filterpruning}
We compressed the convolutional layers of (i) ResNet$50$~\cite{he2016deep} on ILSVRC-$2012$~\cite{deng2009imagenet}, (ii) ResNet$56$~\cite{he2016deep}, (iii) VGG$19$~\cite{simonyan2014very} on CIFAR$10$ and (iv) VGG$16$~\cite{simonyan2014very} on CIFAR$10$.

\begin{table}[h!]
\caption{Filter pruning results on different neural networks with respect to the CIFAR$10$ dataset.}
\begin{center}
\adjustbox{max width=\linewidth}{
\centering
\begin{tabular}{c||c| c c c c} 
 \hline
    Model & Method & Baseline Err. (\%) & Pruned Err. (\%)  & PR (\%) & FR (\%)\\
 \hline
\multirow{5}{2cm}{\centering VGG-19} & PfS & $6.4$ & $6.29$ & $52$ & NA\\
& Rethink & $6.5$ & $6.22$ & $80$ & NA\\
& Structured Pruning  & $6.33$& $6.20$ & $88$ & NA\\
 & PvC & $6.33$ & $6.02$ & $88$ & NA\\
 
 & \textbf{Our method ($88$)} & $\mathbf{6.33}$ & $\mathbf{5.85 \;  (6.03 \pm 0.18)}$ & $\mathbf{88}$ & NA \\
  
 & \textbf{Our method ($91.28$)} & $\mathbf{6.33}$ & $\mathbf{6.23 \;  (6.35 \pm 0.12 )}$ & $\mathbf{91.28}$ & NA \\
 \hline
 \multirow{5}{2cm}{\centering VGG-16} & ThiNet & $7.11$& $9.24$& $63.95$ & $64.02$ \\
 & FT & $7.11$ & $8.22$ & $80.09$ & $80.14$ \\
 & SoftNet & $7.11$ & $7.92$ & $63.95$ & $63.91$ \\
 & PFP ($94$) & $7.11$ & $7.61$ & $94.32$ & $85.03$ \\
 & PFP ($87$) & $7.11$ & $7.17$ & $87.06$ & $70.32$ \\
 & PFP ($80$) & $7.11$ & $7.06$ & $80.02$ & $59.21$\\
 & \textbf{Our method ($95.32$)} & $\mathbf{7.11}$ & $\mathbf{7.31 \;  (7.55 \pm 0.24)}$ & $\mathbf{95.32}$ & $\mathbf{85.09}$ \\
 & \textbf{Our method ($87$)} & $\mathbf{7.11}$ & $\mathbf{6.63 \;  (6.76 \pm 0.13)}$ & $\mathbf{87}$ & $\mathbf{68.2}$\\
 &\textbf{Our method ($79.53$)} & $\mathbf{7.11}$ & $\mathbf{6.3 \;  (6.38 \pm 0.08)}$ & $\mathbf{79.53}$ & $\mathbf{59.14}$\\
 \hline
 \multirow{6}{2cm}{\centering ResNet56} & ThiNet & $7.05$ & $8.433$ & $49.23$ & $49.74$\\ 
 & Channel Pruning & $7.2$ & $8.2$ & N/A & $50$   \\
 & AMC & $7.2$ & $8.1$ & $64.78$ & $50$\\
& CCP & $6.5$ & $6.42$ & $57$ & $52.6$\\
 & PvC & $6.21$ & $7.0$ & $55$ & N/A\\ 
 & \textbf{Our method} & $\mathbf{6.61}$ & $\mathbf{7.26 \;  (7.56 \pm 0.3)}$ & $\mathbf{63.95}$ & $\mathbf{50}$\\
 \hline
\end{tabular}
}
\end{center}
\label{tab:filter_pruning_cifar_10}
\end{table}

\begin{table}[h!]
\centering
\caption{Filter pruning of ResNet$50$ on ImageNet (ILSVRC-$2012$).}
\adjustbox{max width=\textwidth}{
\begin{tabular}{c|c c c c} 
 \hline
Method & Baseline Err. (\%) & Pruned Err. (\%) & PR (\%) & FR (\%)\\
 \hline
 PFP & $23.87$ & $24.79$ & $44.04$ & $30.05$ \\
 \hline 
 Soft Pruning & $23.85$ & $25.39$ & $49.35$ & $41.80$ \\
 \hline
 CCP & $23.85$ & $24.5$ & $56$ & $48.8$\\
 \hline
 FPGM & $23.85$ & $25.17$ & $62$ & $53.5$\\ 
 \hline
 ThiNet-$70$ & $27.72$ & $26.97$ & $33.72$ & $36.78$\\
 \hline
 ThiNet-$50$ & $27.72$ & $28.0$ & $51.5$ & $55.82$ \\ 
 \hline
 PvC & $23.78$ & $25.11$ & $62$ & N/A \\ 
 \hline
 \textbf{Our method} & $\mathbf{23.78}$ & $\mathbf{24.87 \;  (25.07 \pm 0.2)} $ & $\mathbf{62}$ & $\mathbf{61.5}$ \\ 
 \hline
\end{tabular}}
\label{tab:resnet50}
\end{table}

\textbf{Filter pruning of DNNs on CIFAR$\mathbf{10}$ and  ImageNet (ILSVRC-$2012$).} For Cifar$10$, we used PyTorch implementations of VGG$19$ and VGG$16$. We compressed both models using our approach by different compression rates  as shown in Table~\ref{tab:filter_pruning_cifar_10} with a comparison to other methods. For ImageNet, we compressed the baseline model of ResNet$50$~\cite{he2016deep} roughly by $62\%$ in terms of number of parameters. 
Table~\ref{tab:resnet50} provides comparison between our method and other baselines.

\textbf{Discussion. } As can be seen in Tables~\ref{tab:neuronPruning},~\ref{tab:filter_pruning_cifar_10}, and~\ref{tab:resnet50}, our method either outperforms the competing methods or achieves comparable results. As for the coreset methods, our algorithm achieves better result than all of them in this setting, e.g., we compressed $62\%$ of ResNet50 trained on ImageNet while incurring $1.09\%$ drop in accuracy, improving the recent coreset result of PvC~\cite{mussay2021data} for the same compression ratio, while PFP~\cite{liebenwein2020provable} compressed $44.04\%$ to achieve comparable results.

\section{Related work}
\label{sec:related}
DNNs can be compressed before training~\cite{ tanaka2020pruning, Wang2020Picking, lee2018snip}, during training~\cite{zhu2017prune,yu2018slimmable, kusupati2020soft}, or after training~\cite{singh2020woodfisher}. Furthermore, such procedures may also be repeated iteratively~\cite{renda2020comparing}. As previously noted pruning can be categorized into structured and unstructured pruning.

\textbf{Unstructured pruning.} Weight pruning~\cite{lecun1990optimal} techniques aim to reduce the number of weights in a layer while approximately preserving its output. 
Approaches of this type include the works of~\cite{lebedev2016fast,dong2017learning,iandola2016squeezenet,aghasi2017net,lin2017runtime}, where the desired sparsity is embedded as a constraint or via a regularizer into the training pipeline, and those of~\cite{Han15,renda2020comparing,guo2016dynamic}, where weights with absolute values below a threshold are removed. 
The approaches of~\cite{baykal2018datadependent, sipp2019} use a mini-batch of data points to approximate the influence of each parameter on the loss function. Other data-informed techniques include~\cite{gamboa2020campfire, Lin2020Dynamic, molchanov2016pruning, molchanov2019importance, yu2018nisp}. A thorough overview of recent pruning approaches is given by~\cite{gale2019state, blalock2020state}.
However, unlike our approach, weight-based pruning approaches generate sparse models instead of smaller ones thus requiring specialized hardware and sparse linear algebra libraries in order to speed up inference.

\textbf{Structured pruning.}
Pruning entire neurons and filters directly shrinks the network leading to smaller storage requirements and improved inference-time performance on any hardware~\cite{li2019learning,luo2018autopruner}. Lately, these approaches were investigated in many papers~\cite{liu2019metapruning,li2019learning,chen2020storage,he2019filter,dong2017more,kang2020operation,ye2020good, ye2018rethinking}. Usually, filters are pruned by assigning an importance score to each neuron/filter, either solely weight-based~\cite{he2017channel, he2018soft} or data-informed~\cite{yu2018nisp, liebenwein2020provable}, and removing those with a score below a threshold.
The procedure can be embedded into an iterative pruning scheme~\cite{renda2020comparing} that requires potentially expensive retrain cycles.

\textbf{Tensor decomposition. } 
Some of the work in DNN compression entails decomposing the layer into multiple smaller ones, e.g., via low-rank tensor decomposition~\cite{Denton14,jaderberg2014speeding,maalouf2020deep,kim2015compression, tai2015convolutional,ioannou2015training,alvarez2017compression,tukan2021no,yu2017compressing,lebedev2014speeding,liebenwein2021compressing}. 
Other approaches to tensor decomposition include weight sharing, random projections, and feature hashing~\cite{Weinberger09, arora2018stronger, shi2009hash, Chen15Hash, Chen15Fresh, ullrich2017soft}.
However, such techniques usually require expensive approximation algorithms or use heuristics since tensor decomposition is generally NP-hard.

\textbf{Coresets.} In the recent years, coresets got increasing attention, and where leveraged to compress the input datasets of many machine learning algorithms, improving there performance, e.g.,  regression~\cite{maalouf2022fast,huggins2016coresets,munteanu2018coresets,karnin2019discrepancy,nearconvex}, decision trees~\cite{jubran2021coresets}, matrix approximation~\cite{feldman2013turning, maalouf2019fast,feldman2010coresets,sarlos2006improved,maalouf2021coresets}, data discretization~\cite{maalouf2022coresets},  clustering~\cite{feldman2011scalable,gu2012coreset,lucic2015strong,bachem2018one,jubran2020sets, schmidt2019fair, tukan2022new}, $\ell_z$-regression~\cite{cohen2015lp, dasgupta2009sampling, sohler2011subspace}, \emph{SVM}~\cite{har2007maximum,tsang2006generalized,tsang2005core,tsang2005very,tukan2021coresets}, deep learning models~\cite{maalouf2021unified,baykal2018datadependent,liebenwein2020provable} and even for path planning in the field of robotics~\cite{tukan2022obstacle}. For extensive surveys on coresets, we refer the reader to~\cite{feldman2020core, phillips2016coresets, jubran2019introduction,maalouf2021introduction}.

\section{Conclusions and Future Work}\label{sec:conc}
In this paper, we provided a coreset-based pruning technique that hinges upon a combination of tools from convex geometry, while achieving SOTA results with respect to coreset-based structured pruning approaches on a variety of networks. Our main improvement is that our coreset is (training) data-independent and assumes a single assumption on the models weights. 

\textbf{Future work includes } (i) suggesting a coreset based budget allocation framework, to determine the (optimal) per layer prune ratio while achieving an overall desired compression rate, (ii) extending our coreset technique to other layers such as attention layers, and (iii) bridging the gap between coreset based pruning approaches and tensor-decomposition methods, as both techniques are theoretically supported by bounding the approximation error given specific compression rate, we can leverage these bounds to formulate the compression problem as an optimization problem which iterates between the two approaches to search for the local minimum.


\bibliographystyle{plain}
\bibliography{main}


\begin{enumerate}

\item For all authors...
\begin{enumerate}
  \item Do the main claims made in the abstract and introduction accurately reflect the paper's contributions and scope?
     \answerYes{}
  \item Did you describe the limitations of your work?
    \answerYes{See section~\ref{sec:conc}}
  \item Did you discuss any potential negative societal impacts of your work?
    \answerNA{}
  \item Have you read the ethics review guidelines and ensured that your paper conforms to them?
    \answerYes{}
\end{enumerate}

\item If you are including theoretical results...
\begin{enumerate}
  \item Did you state the full set of assumptions of all theoretical results?
    \answerYes{See Section~\ref{sec:method}}
        \item Did you include complete proofs of all theoretical results?
    \answerYes{See Sections~\ref{sec:analysis} and~\ref{sec:proofs}}
\end{enumerate}

\item If you ran experiments...
\begin{enumerate}
  \item Did you include the code, data, and instructions needed to reproduce the main experimental results (either in the supplemental material or as a URL)?
    \answerYes{}
  \item Did you specify all the training details (e.g., data splits, hyperparameters, how they were chosen)?
    \answerYes{}
        \item Did you report error bars (e.g., with respect to the random seed after running experiments multiple times)?
    \answerNo{}
        \item Did you include the total amount of compute and the type of resources used (e.g., type of GPUs, internal cluster, or cloud provider)?
    \answerYes{}
\end{enumerate}

\item If you are using existing assets (e.g., code, data, models) or curating/releasing new assets...
\begin{enumerate}
  \item If your work uses existing assets, did you cite the creators?
    \answerYes{}
  \item Did you mention the license of the assets?
    \answerYes{}
  \item Did you include any new assets either in the supplemental material or as a URL?
    \answerNo{}
  \item Did you discuss whether and how consent was obtained from people whose data you're using/curating?
    \answerNA{}
  \item Did you discuss whether the data you are using/curating contains personally identifiable information or offensive content?
    \answerNA{}
\end{enumerate}

\item If you used crowdsourcing or conducted research with human subjects...
\begin{enumerate}
  \item Did you include the full text of instructions given to participants and screenshots, if applicable?
    \answerNA{}
  \item Did you describe any potential participant risks, with links to Institutional Review Board (IRB) approvals, if applicable?
    \answerNA{}
  \item Did you include the estimated hourly wage paid to participants and the total amount spent on participant compensation?
    \answerNA{}
\end{enumerate}

\end{enumerate}

\newpage 
\appendix
\section{Computing the Carath\'{e}odory set}

\textbf{Overview of Algorithm~\ref{alg:cara}.} First, a convex combination of $v$ with respect to $P$ is formulated as a linear programming problem. This is done by reformulating the input set of points $P$ as a matrix denoted as $A \in \REAL^{(d+1) \times n}$ (see Line~\ref{alg:cara_A}). We then formulate the goal vector $b \in \REAL^{d+1}$ to be $v$ concatenated with an entry of $1$ which serves to make sure that the solution to 
\begin{equation*}
\begin{aligned}
& \underset{x \in \REAL^{n+1}}{\text{minimize}}
& & \mathbf{1}_{n+1}^T x \\
& \text{subject to}
& & Ax = b, \\
&&& x_i \in [0,1] \quad \forall i \in [d]
\end{aligned}
\end{equation*}
satisfies that $\sum\limits_{i \in [n]} x_i P_i = v$ and $\sum\limits_{i \in [n]} x_i = 1$. Solving this problem takes roughly $O^\ast\term{n^{\omega + o\term{1}}}$ where $\omega$ is the matrix multiplication exponent as elaborated in~\cite{cohen2021solving}; see Lines~\ref{alg:cara_b}--\ref{alg:cara_x}. We observe that $x$ from Line~\ref{alg:cara_x} might be dense, i.e., the number of non-zero entries exceeds $d+1$. To ensure that we have at max $d+1$ non-zero entries, we use Algorithm 1 of~\cite{maalouf2019fast} which aims to find a set of $d+1$ points, where their weighted average is the desired $v$ given the initial weight vector $x$; see Line~\ref{alg:cara_final}.

\begin{algorithm}
\caption{$\cara\term{v, P}$}
\SetKwInOut{Input}{Input}
\SetKwInOut{Output}{Output}
\SetAlgoLined
\DontPrintSemicolon
\Input{A point $v \in \REAL^d$ and a set $P \subseteq \REAL^d$ of $n$ points}
\Output{A subset $C \subseteq P$ of at max $d + 1$ points  such that $p \in \conv\term{C}$}
$A := \begin{bmatrix} \begin{bmatrix} P_1 \\ 1 \end{bmatrix} , \begin{bmatrix} P_2 \\ 1 \end{bmatrix}, \ldots, \begin{bmatrix} P_n \\ 1 \end{bmatrix}\end{bmatrix}$ \label{alg:cara_A} \tcc{$P_i$ here denotes the $i$th point in $P$}
$b := \begin{bmatrix} v \\ 1\end{bmatrix}$ \label{alg:cara_b} \;
$x := \argmin{\substack{x \in [0,\infty)^d \\ Ax = b}} \mathbf{1}_d^T x$ \label{alg:cara_x}\tcp{$\mathbf{1}_d$ denotes a $d$ dimensional vector of $1$s}
$C := \caraFast\term{P, x, d^2 + 2}$ \label{alg:cara_final} \tcc{See Algorithm 1 of \cite{maalouf2019fast}}
\label{alg:cara}
\Return $C$
\end{algorithm}
\section{Coreset-Related Technical Details}
\begin{definition}[VC-dimension~\cite{braverman2016new}]
\label{def:dimension}
For a query space $(P,w,\Q,f)$ and $r \in [0,\infty)$, we define 
\[
\RANGES(x,r) = \br{p \in P \mid w(p) f(p,x) \leq r},
\]
for every $x \in \Q$ and $r \geq 0$. The dimension of $(P, w, \Q, f)$ is the size $\abs{S}$ of the largest subset $S \subset P$ such that
\[
\abs{\br{S \cap \RANGES(x,r) \mid x \in \Q, r \geq 0 }} = 2^{\abs{S}},
\]
where $\abs{ A }$ denotes the number of points in $A$ for every $A \subseteq \REAL^d$.
\end{definition}

\subsection{Sensitivity Sampling Missing Details}\label{detailesonsensitity}
We want to use the sensitivity sampling framework to compute a coreset for a set of points $P$ in $\REAL^d$.

First, we need to bound the sensitivity of each point $p\in P$. The sensitivity pf a point $p\in P$ is defined as $s(p) = \sup_{x\in X}\frac{\phi(p,x)}{\sum_{q\in P}\phi(q,x)}$ where the denominator is not zero. 

Hence, for every $p\in P$, we wish to compute a number $s’(p)$, such that $s’(p)\geq s(p)$. Once the bound $s'(p)$ on the sensitivity $s(p)$ of each point $p$ is computed, we define $T=\sum_{p\in P} s’(p)$ as the total sensitivity. Now, to obtain a coreset, we can sample points according to the distribution $s’(p)/T$, i.e., we sample $m>0$ points from $P$, where at each sample, the point $p\in P$ is sampled i.i.d with probability $s’(p)/T$.  We also re-weight the sampled points to obtain a coreset.

As the bound $s’(p)$ (on $s(p)$) is tighter, the total sensitivity $T$ gets smaller, and then the coreset size (required number of sampled points) gets smaller, and vice versa.

\begin{theorem}[Restatement of Theorem 5.5 in~\cite{braverman2016new}]
\label{thm:braverman_coreset}
Let $P \subseteq \REAL^d$ be a set of $n$ points, $w : P \to [0,\infty)$ be a weight function , and let $f : P \times \REAL^d \to [0, \infty)$ be a loss function. For every $p \in P$ define the \emph{sensitivity} of $p$ as
$$
\sup_{x \in \REAL^d} \frac{w(p)f(p,x)}{\sum_{q \in P} w(q)f(q,x)},
$$
where the sup is over every $x \in \REAL^d$ such that the denominator is non-zero.
Let $s: P \to [0,1]$ be a function such that $s(p)$ is an upper bound on the sensitivity of $p$.
Let $t = \sum_{p \in P} s(p)$ and $d'$ be the~\emph{VC dimension} of the quadruple $\term{P,w,\REAL^d,f}$; see Definition~\ref{def:dimension}. Let $c \geq 1$ be a sufficiently large constant, $\varepsilon, \delta \in (0,1)$, and let $S$ be a random sample of $\abs{S} \geq \frac{ct}{\varepsilon^2}\left(d^\prime\log{t}+\log{\frac{1}{\delta}}\right)$ 
i.i.d points from $P$, such that every $p \in P$ is sampled with probability $\frac{s(p)}{t}$. Let $v(p) = \frac{tw(p)}{s(p)\abs{S}}$ for every $p \in S$. Then, with probability at least $1-\delta$, $(S,v)$ is an $\varepsilon$-coreset for $\term{P,w,\REAL^d,f}$.
\end{theorem}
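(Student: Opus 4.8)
The plan is to establish this sensitivity-sampling guarantee in two stages: first verify that the importance-sampling reweighting yields an unbiased estimator of the query cost, and then upgrade per-query concentration to uniform concentration over all $x \in \REAL^d$ using the dimension $d'$. Throughout I write $\mathrm{cost}(P,x) = \sum_{p \in P} w(p) f(p,x)$ and $\mathrm{cost}(S,x) = \sum_{p \in S} v(p) f(p,x)$, and fix the sampling distribution $D(p) = s(p)/t$. Since $v(p) = \frac{t w(p)}{s(p)\abs{S}}$, a direct computation gives $\mathbb{E}_{q \sim D}\!\left[\frac{t w(q) f(q,x)}{s(q)}\right] = \mathrm{cost}(P,x)$, so $\mathrm{cost}(S,x)$ is an unbiased estimator of $\mathrm{cost}(P,x)$ for every fixed $x$ with $\mathrm{cost}(P,x) \neq 0$.

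The crucial step is a normalization that exposes why the total sensitivity $t$ enters only linearly. For each query $x$ with $\mathrm{cost}(P,x) \neq 0$ define $\phi_x(p) = \frac{w(p) f(p,x)}{s(p)\,\mathrm{cost}(P,x)}$. The assumption that $s(p)$ upper bounds the sensitivity $\sup_x \frac{w(p) f(p,x)}{\mathrm{cost}(P,x)}$ forces $\phi_x(p) \in [0,1]$ for all $p$ and $x$, while $\mathbb{E}_{q \sim D}[\phi_x(q)] = \frac{1}{t}$ exactly. Moreover $\mathrm{cost}(S,x)/\mathrm{cost}(P,x) = t \cdot \frac{1}{\abs{S}}\sum_{q \in S}\phi_x(q)$, so the desired multiplicative guarantee $\abs{1 - \mathrm{cost}(S,x)/\mathrm{cost}(P,x)} \le \varepsilon$ is equivalent to the empirical mean $\frac{1}{\abs{S}}\sum_{q\in S}\phi_x(q)$ approximating its expectation $1/t$ within additive error $\varepsilon/t$. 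This reduces the coreset claim to a uniform (over $x$) relative-error estimation of the mean of the $[0,1]$-valued family $\br{\phi_x}_x$, where the target mean has magnitude $\approx 1/t$.

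The uniform bound will come from the theory of relative $(\rho,\varepsilon)$-approximations for range spaces. I would first bound the combinatorial complexity of $\br{\phi_x}$ by the dimension $d'$ of $(P,w,\REAL^d,f)$ from Definition~\ref{def:dimension}: the superlevel sets $\br{p \mid w(p) f(p,x) > r}$ are exactly the complements of the ranges $\RANGES(x,r)$, and they form the subgraph range space of the $\phi_x$, so its shattering dimension is $O(d')$; a threshold-integration argument lets a simultaneous relative approximation of all these ranges control $\mathbb{E}[\phi_x]$ itself. Then, invoking the relative-approximation sampling theorem with threshold $\rho \approx 1/t$ and error $\varepsilon$, a sample of size $O\!\left(\frac{1}{\varepsilon^2 \rho}\term{d' \log\frac{1}{\rho} + \log\frac{1}{\delta}}\right) = O\!\left(\frac{t}{\varepsilon^2}\term{d'\log t + \log\frac{1}{\delta}}\right)$ guarantees, with probability at least $1-\delta$, that every event of measure at least $1/t$ is approximated within relative error $\varepsilon$. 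Choosing the constant $c$ to absorb the hidden constants yields the stated sample size, and translating back through the equivalence of the previous paragraph gives the $\varepsilon$-coreset guarantee for all queries simultaneously.

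The main obstacle is precisely this last step: obtaining uniform convergence over the continuous query set $\REAL^d$ at the linear-in-$t$ rate. A naive union bound combined with Hoeffding on the bounded variables $\phi_x \in [0,1]$ only yields a quadratic $t^2/\varepsilon^2$ dependence, because it ignores that the quantity being estimated has size $1/t$ rather than $\Theta(1)$; extracting the linear dependence genuinely requires the sharper relative (sensitive) approximation machinery together with a correct bound on the shattering dimension of the induced subgraph range space in terms of $d'$. The remaining technical points are the reduction from $[0,1]$-valued functions to pure indicator ranges and the handling of the degenerate queries with $\mathrm{cost}(P,x)=0$, which are excluded from the sensitivity supremum and thus must be argued not to affect the coreset identity.
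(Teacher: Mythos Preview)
The paper does not supply its own proof of this statement: Theorem~\ref{thm:braverman_coreset} is explicitly a restatement of Theorem~5.5 from Braverman et al.\ and is used as a black box in the proof of Theorem~\ref{thm:relu_core}. There is therefore no in-paper argument to compare your proposal against.

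That said, your sketch is essentially the standard proof strategy behind the cited result. The normalization $\phi_x(p) = \frac{w(p)f(p,x)}{s(p)\,\mathrm{cost}(P,x)} \in [0,1]$ with $\mathbb{E}_{q\sim D}[\phi_x(q)] = 1/t$, followed by an appeal to relative $(\rho,\varepsilon)$-approximation at threshold $\rho \approx 1/t$ over a range space whose shattering dimension is controlled by $d'$, is exactly how the linear-in-$t$ sample complexity is obtained in the sensitivity-sampling literature (Langberg--Schulman, Feldman--Langberg, Braverman et al.). Your identification of the main obstacle---that Hoeffding alone gives $t^2/\varepsilon^2$ and the relative-approximation machinery is genuinely needed---is accurate, and the reduction from real-valued $\phi_x$ to indicator ranges via threshold integration is the usual device. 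So your proposal is correct and aligned with the cited source; it simply goes beyond what this paper itself attempts.
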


\subsection{From Coresets to Approximating the Optimal Solution}\label{weak-coresets}

In optimization problems (or machine learning in general), the goal is usually to find a query that minimizes (or maximizes) some cost function. 
In the context of coresets, the goal is to find a small weighted subset such that for a given cost function, the cost of applying any solution (hypotheses/query) on the coreset approximates the cost of applying the same solution on the whole data. Since a coreset approximates the cost of every query, we do note that in many cases, coresets are applied for approximating the optimal solution. Specifically, solving the desired optimization problem on the whole data can be a hard problem when the time needed for such a solution is either polynomial or exponential in the size of the whole data, or when the required memory is too high. In this case, coresets can be leveraged, by computing the the optimal solution of fitting an $\eps$-coreset and applying it on the original data. If the computed coresets gives worst-case $(1+\eps)$-approximation error, then we provably $(1+4\eps)$-approximation towards the optimal cost of solving the optimization on the whole data (the proof is very easy, it is done by applying the triangle inequality few times). In other words, we can solve the problem on the coreset to obtain a solution $x^*$, and then apply $x^*$ to the whole data giving a good approximation for solving the problem from the beginning on the whole data. 


\section{Proofs of Technical Results}\label{sec:proofs}
\subsection{Proof of Lemma~\ref{lem:conv:ineq}}
\lemconvineq*
\begin{proof}
Since we can write $p$ as the convex combination of points $q \in A$ with weight $\alpha(q)$, we have
\[
\norm{p^T Y-v}_1=\norm{\term{\sum_{q\in A}\alpha(q)q^T}Y -v}_1.
\]
Moreover, we have $\sum_{q \in A} \alpha(q)=1$, so we can decompose $v$ into
\[
\norm{p^TY-v}_1= \norm{\sum_{q\in A}\alpha(q)\term{q^TY-v}}_1.
\]

By triangle inequality (or Jensen's inequality),
\[
\norm{p^TY-v}_1\leq \sum_{q\in A}\alpha(q) \norm{q^T Y-v}_1 \leq \max_{q\in A} \norm{q^TY-v}_1.
\]
\end{proof}

\subsection{Proof of Lemma~\ref{lem:l1_reg}}
\linflonereg*

\begin{proof}
Let $Y,z,P^\prime,K, S$ and $\mathrm{map}$ be defined as in Algorithm~\ref{alg:mainINF}. Since $P$ lies on a $r$-dimensional affine subspace, it holds that for every $p \in P$, $p = (p - z)YY^T + z$.
Note that $Y \in \REAL^{d \times r}$ is an orthogonal matrix (i.e., $Y^TY$ is the identity matrix in $\REAL^{r \times r}$) and $z \in \REAL^d$ denotes the translation of the affine subspace that $P$ lies on.

\textbf{Claim~\ref{clm:l_inf_1}. }$E(G,c)$ is the L\"{o}wner ellipsoid of $P^\prime$, which has $2r$ vertices. Since $V$ is the set of vertices of the shrunk form of $E(G,c)$ that is contained in $\conv\term{P^\prime}$, each point from $V$ can be represented as convex combination of $r+1$ points from $P^\prime$ by Carath\'{e}odory's Theorem. Then the number of points in $K$ is at most $2r\term{r+1}$, i.e., $\abs{K} \in O\term{r^2}$. Thus, $\abs{S} \in O\term{r^2}$ due to the fact that it can be constructed from $K$ through the use of $\mathrm{map}$. 

\textbf{Claim~\ref{clm:l_inf_2}. }First put $X \in \REAL^{d \times j}$ and $v \in \REAL^d$, and let $p \in \argsup{q \in P} \norm{\term{p-v}X}_1$. Since $S \subseteq P$, it holds that $\frac{\max_{q \in P} \norm{\term{q - v}X}_1}{\max_{q \in S} \norm{\term{q - v}X}_1} \geq 1$. Let $a := zYY^T + z - v$, we have
\begin{equation}
\label{eq:transform_between_dims}
\begin{split}
&\norm{\term{p - v}X}_1 = \norm{\term{\term{p - z}YY^T + z - v}X}_1^T = \norm{\term{pYY^T + a}X}_1 = \norm{\term{p^\prime Y^T + a}X}_1,
\end{split}
\end{equation}
where the first equality holds since $\mathrm{rank}(P)=r$, 
the second holds by definition of $a$, and the last equality holds by the construction of $p^\prime = pY$ at Line~\ref{alg_inf:Pprime} of Algorithm~\ref{alg:mainINF}.

Note that since $V$ is the set of vertices of $\frac{1}{r} \term{E(G,c) - c} + c$, by the definition of the L\"{o}wner ellipsoid, 
\begin{equation*}
\label{eq:inclusion1}
\begin{split}
V &\subseteq \conv\term{V} \subseteq \frac{1}{r}(E(G,c)-c)+ c \subseteq \conv\term{P^\prime} \subseteq E(G,c) \subseteq \conv\term{r^{1.5}\term{V - c} + c}.
\end{split}
\end{equation*}



Since $\conv\term{P'}$ enclose $\conv\term{V}$, and is enclosed by $\conv\term{r^{1.5}\term{V - c} + c}$, then there exists a point $q \in \conv\term{V}$ and $\gamma \in [0,1]$ such that $
p^\prime Y^T = \gamma qY^T + \term{1 - \gamma} \term{r^{1.5}\term{q - c} + c}Y^T,$
where by definition it holds that $r^{1.5}\term{q - c} + c \in \conv\term{r^{1.5}\term{V - c} + c}$, and $p^\prime = pY$.

By invoking Lemma~\ref{lem:conv:ineq}, we obtain that
\begin{align}
\label{eq:bounding_p}
\norm{\term{p^\prime Y^T + a}X}_1 &\leq \max\left\lbrace \norm{\term{q Y^T + a}X}_1, \norm{\term{r^{1.5}\term{q - c} + c + a}Y^TX}_1\right\rbrace. 
\end{align}

We note that
\begin{equation}
\label{eq:bounding_V}
\begin{split}
\norm{\term{qY^T + a}X}_1 &\leq \max_{\Tilde{q} \in V}\norm{\term{\Tilde{q} Y^T+ a}X}_1 \leq \max_{\Tilde{q} \in K}\norm{\term{\Tilde{q} Y^T + a}X}_1,
\end{split}
\end{equation}
where the first inequality follows from plugging $p := q$ and $A:=V$ into Lemma~\ref{lem:conv:ineq}, and the second inequality holds similarly since every point $V$ lies in $\conv\term{K}$. By invoking triangle inequality, we obtain that
\begin{align}
\label{eq:triangle_ineq}
&\norm{\term{r^{1.5}\term{\term{q - c} + c}Y^T + a}X}_1 
= \norm{\term{r^{1.5}qY^T + \term{1 - r^{1.5}}cY^T + a}X}_1 \\
&= \norm{\term{r^{1.5}qY^T + r^{1.5}a + \term{1 - r^{1.5}}cY^T + \term{1 - r^{1.5}}a}X}_1 \nonumber\\
&\leq r^{1.5}\norm{\term{qY^T+a}X}_1 +  \term{r^{1.5} - 1} \norm{\term{cY^T+a}X}_1, \nonumber
\end{align}
where the first equality follows by a simple rearrangement, and the second holds since $r^{1.5}a+ \term{1 - r^{1.5}}a = a$. Observe that $c \in \conv\term{V}$. Hence, by Lemma~\ref{lem:conv:ineq},
\begin{equation}
\label{eq:bounding_c}
\begin{split}
\norm{\term{cY^T+a}X}_1 \leq \max_{\Tilde{q} \in K}\norm{\term{\Tilde{q} Y^T+a}X}_1.
\end{split}
\end{equation}

By the construction of $S$, it holds that for every $p \in S$, $pY \in K$. Thus, combining~\eqref{eq:bounding_p},~\eqref{eq:bounding_V},~\eqref{eq:triangle_ineq} and~\eqref{eq:bounding_c} yields
$
\frac{1}{2r^{1.5}}\norm{\term{p^\prime Y^T + a}X}_1 \leq \max_{\Tilde{q} \in K} \norm{\term{\Tilde{q} Y^T + a}X}_1 = \max_{\Tilde{q} \in S} \norm{\term{\Tilde{q} Y Y^T +a}X}_1=\max_{\Tilde{q} \in S} \norm{\term{\Tilde{q} - v}X}_1$ where the last equality holds by~\eqref{eq:transform_between_dims}. This concludes Lemma~\ref{lem:l1_reg}.
\end{proof}

\subsection{Proof of Theorem~\ref{thm:relu_core}}
\begin{proof}
For space constraints let $\phi$ denote the $\mathrm{ReLU}$ function. To obtain a coreset, we first need to bound the sensitivity of each $p \in P$. Put $p \in P$ and let $x \in \argsup{x^\prime\in \REAL^d} \frac{\phi\term{p^Tx^\prime}}{\sum_{q \in P} \phi\term{q^Tx^\prime}}$ where the supremum is over every $x^\prime\in \REAL^d$ such that the denominator is not zero. Observe that 
\begin{equation*}
\begin{split}
&\frac{\sum\limits_{q \in P} \abs{q^Tx}}{\sum\limits_{q \in P} \phi\term{q^Tx}} =\frac{\sum\limits_{q \in P} \phi\term{q^Tx} + \sum\limits_{q \in P} \phi\term{-q^Tx}}{\sum\limits_{q \in P} \phi\term{q^Tx}}
= 1 + \frac{\sum\limits_{q \in P} \phi\term{-q^Tx}}{\sum\limits_{q \in P} \phi\term{q^Tx}} \leq 1 + \mu\term{P},
\end{split}
\end{equation*}
where the last inequality follows from Definition~\ref{def:complexity_mes}. Thus  
\begin{equation}
\label{eq:bound_relu_below}
\begin{split}
\sum_{q \in P} \phi\term{q^Tx} \geq \frac{1}{1 +\mu\term{P}} \sum_{q \in P} \abs{q^Tx}.
\end{split}
\end{equation}

Let $\beta= \term{1 + \mu\term{P}}$. We next observe that 
$
\frac{\phi\term{p^Tx}}{\sum_{q \in P}\phi\term{q^Tx}} \leq\frac{\abs{p^Tx}}{\sum_{q \in P}\phi\term{q^Tx}}\leq \beta\frac{\abs{p^Tx}}{\sum_{q \in P}\abs{q^Tx}},$
where the first inequality holds by properties of $\phi$, and the second is by~\eqref{eq:bound_relu_below}.
Hence the sensitivity of $p$ is bounded by
\begin{equation}
\label{eq:bound_sens_1}
\begin{split}
s(p) =  \frac{\phi\term{p^Tx}}{\sum_{q \in P} \phi\term{q^Tx}}\leq \beta \frac{\abs{p^Tx}}{\sum_{q \in P} \abs{q^Tx}}.
\end{split}
\end{equation}

Let $i$ be the iteration counter from Algorithm~\ref{alg:main} as defined in Line~\ref{alg1:line_i}, used in Line~\ref{alg1:used_i} and incremented in Line~\ref{alg1:line_advance_i}. The idea follows that of~\cite{varadarajan2012near} where points being discarded from $P$ at lower levels (smaller $i$'s) have higher sensitivity. This notion also resembles that of the \say{Onion sampling} of~\cite{jubran2020sets}. Now, assume that $p \in S_i$ at iteration $i$ of the while loop (Line~\ref{alg1:Si} of Algorithm~\ref{alg:main}). In this case, observe that by plugging $P:= Q \setminus \bigcup_{\hat{i} = 1}^{i - 1} S_{\hat{i}}$, $j=1$ and $v = -b\frac{x}{\norm{x}_2}$ into Lemma~\ref{lem:l1_reg}, we obtain a subset $S_i \subseteq Q$ such that
\begin{equation}
\label{eq:coreset_l_inf_guarantee}
\max_{q \in P} \abs{q^Tx} \leq \max_{q \in S} 2r^{1.5}\abs{q^Tx}.
\end{equation}

Thus for every $q \in S_i$,
\begin{equation*}
\begin{split}
\frac{s(p)}{\term{1 + \mu\term{P}}} &\leq \frac{\abs{p^Tx}}{\sum_{q \in P} \abs{q^Tx}} \leq \frac{\abs{p^Tx}}{\sum_{\hat{i} = 1}^i \max_{q \in S_{\hat{i}}}\abs{q^Tx}}\leq 2r^{1.5}\frac{\abs{p^Tx}}{\sum_{\hat{i} = 1}^i \abs{p^Tx}} = \frac{2r^{1.5}}{i},
\end{split}
\end{equation*}
where the first inequality is by~\eqref{eq:bound_sens_1}, the second inequality follows from the observation that $\br{\arg\max_{q \in S_{\hat{i}}} \abs{q^Tx}}_{\hat{i} =1}^i \subseteq P$ and the last inequality holds by~\eqref{eq:coreset_l_inf_guarantee}. 
Hence, we have obtained a bound on the sensitivity of each point $p \in P$. As for the total sensitivity, we observe that $t = \sum_{p \in P}s(p) \in O\term{\term{1 + \mu\term{P}} r^{3.5}\log{n}}$. 
Theorem~\ref{thm:braverman_coreset} states that to obtain an $\eps$-coreset with probability at least $1 - \delta$, the sample size $m$ must be $O\term{\frac{\mu\term{P} r^{3.5}\log{n}}{\eps^2} \term{d\term{\log\term{\mu\term{P} r\log{n}}} + \log\term{\frac{1}{\delta}}}}$.
\end{proof}

\section{Extension}
\subsection{Handling Weighted Sets of Points}
\label{sec:ext_appendix_weighted}
Similarly to~\cite{baykal2018datadependent,liebenwein2020provable}, we split the input data $P$ into two sets $P_+, P_- \subseteq P$ such that $P_+ = \br{p \in P \middle| w(p) \geq 0}$ while $P_- = \br{p \in P \middle| w(p) < 0}$. Following this step we call Algorithm~\ref{alg:mainGen} for each of the two sets with corresponding weights and corresponding sample sizes. To account for proper sample sizes, we split our theoretical bound of the required sample size for generating $\eps$-coreset into two terms for both $P_+$ and $P_-$ respectively, i.e., we formulate $m = m_{+} + m_{-}$ where $m_{+} = \frac{\abs{P_-}}{\abs{P}} m$ (similarly for $m_{-}$). 
Hence, we obtain an $\eps$-coreset for each of the query spaces $\term{P_{+},w,\REAL^d,\phi}$ and $\term{P_{-},w,\REAL^d,\phi}$.

\subsection{From Weight to Neuron Pruning}\label{ext:nuronp}

Most coreset-based pruning methods, e.g.,~\cite{baykal2018datadependent, mussay2021data}, first provide a scheme for (provable) weight pruning, 
which is then used as a stepping stone towards pruning neurons as follows. To prune neurons from a layer, post to computing the coreset-based weight pruning for each neuron, ideally we would have that at certain layer, for all neurons, the generated coreset contains the same set of neurons from previous layers, which in this case we can remove the neurons which are not in the coreset. However, such scenario is almost implausible. To deal with such problem, we discuss two ways to do so. The first method to deal with such problem is inspired by the technique used in~\cite{mussay2021data} which alters the definition of sensitivity such that it takes into account the sensitivity of a neuron in a layer $\ell$ with respect to all the neurons in the layer $\ell+1$, basically the sensitivity of each neuron is taken be the maximal sensitivity over every weight function (neuron in the next layer) defined by the layer. Hence, we follow the same logic for such method, more details .

\begin{algorithm}
\caption{$\gencoreset\term{P,w,m}$}
\label{alg:mainGen}
 \SetKwInOut{Input}{input}
\SetKwInOut{Output}{output}
\SetAlgoLined
\DontPrintSemicolon
\Input{A set $P\subseteq\mathbb{R}^d$ of $n$ points, a weight function $w(p) : P \to [0,\infty)$ and a sample size $m$}
\Output{ A weighted set $(C,u)$}
$Q := P$, $i := 1$, $C := \emptyset$\label{alg3:line_i}\;
\While{\label{alg3line:main_while_loop}$\abs{Q} \ge 2\mathrm{rank}\term{Q}^2$}{
$Q^\prime := \br{w(q) q \middle| q \in Q}$\;
$\mathrm{map}_w : Q^\prime \to Q$ a map that maps from $Q^\prime$ to $Q$\;
$S_i := \infcoreset\term{Q^\prime}$\label{alg3:Si}\;
\For{every $p \in S_i$\label{alg3:sens_start}}{
$s\term{\mathrm{map}_w\term{p}} := \frac{2r^{1.5}}{i}$ \label{alg3:used_i}\;
}\label{alg3:sens_end}
$Q :=  Q\setminus \br{\mathrm{map}_w\term{q} \middle| q \in S_i}$, $i := i+1$ \label{alg3:line_advance_i}\;
}
\For{every $p \in Q$}{
$s(p) := \frac{2r^{1.5}}{i}$\;
}
$t := \sum_{p\in P}s(p)$\;
$C:=$ an i.i.d sample of $m$ points from $P$, where each  $p\in P$ is sampled with probability $\frac{s(p)}{t}$. \label{alg3:start_coreset}\;
$u(p):= \frac{tw(p)}{m\cdot s(p)}$ for every $p\in C$ \label{alg3:end_coreset}\;
\Return$(C,u)$
\end{algorithm}

\subsection{Other Activation Functions}
\label{sec:ext_other_activations}
\cite{mai2021coresets} recently showed that there exists a family of functions $\mathcal{F}$ called \say{Nice hinge functions} such that for any query $x \in \REAL^d$ and a set of points $P \subseteq \REAL^d$, for any $\phi \in \mathcal{F}$, it holds that $\frac{\sum\limits_{p \in P} \phi\term{p^Tx}}{\sum\limits_{q \in P} \relu{q^Tx}}$ is bounded from below. Formally speaking, below is the definition of a \say{nice hinge function}.

\begin{definition}[Restatement of Definition~7 of~\cite{mai2021coresets}]
\label{def:nice_hinge}
We call $f : \REAL \to [0,\infty)$ an $\term{L,a_1,a_2}$-nice hinge function if for a fixed constant $L$, $a_1$ and $a_2$,
\begin{enumerate}[label=(\roman*)]
    \item $f$ is $L$-Lipschitz,
    \item $\abs{f(z) - \relu{z}} \leq a_1$ for all $z$, and 
    \item $f(z) \geq a_2$ for all $z \geq 0$.
\end{enumerate}
\end{definition}

As noted by~\cite{mai2021coresets}, the hinge and log losses are $(1,1,1)$-nice and $\term{1,\ln{2},\ln{2}}$-nice hinge functions respectively. Similarly, it is easy to show that the activation function $\phi(x) = \ln\term{1 + e^x}$ is $\term{1, \ln{2},\ln{2}}$-nice hinge functions. Following the same steps applied by~\cite{mai2021coresets}, we obtain that 
\[
\sum\limits_{p \in P}\phi\term{p^T x} \geq \min\br{\frac{a_2}{2a_1}, \frac{1}{2}}\frac{\sum\limits_{q \in P} \abs{q^Tx}}{\mu\term{P} + 1},
\]
where $\phi\term{\cdot}$ is a $\term{L,a_1,a_2}$ where $a_2$ is assumed to be positive. 

Unlike the $\mathrm{ReLU}$ activation function, to support for other activation functions, we need to restrict our query space to contain queries such that $\forall p \in P: \phi\term{p^Tx} \leq r\abs{p^Tx}$ where $r$ denotes the rank of $P$. Let $X^\prime$ denote the set of all such queries.

Hence, under this additional assumption, we obtain that for every $p \in P$ and $x \in X^\prime$
\[
\frac{\phi\term{p^Tx}}{\sum\limits_{q \in P} \phi\term{q^Tx}} \leq \frac{\term{1+ \mu\term{P}} \phi\term{p^Tx}}{\min\br{\frac{a_2}{2a_1},\frac{1}{2}} \sum\limits_{q \in P} \abs{q^Tx}} \leq \frac{\term{1+ \mu\term{P}} r\abs{p^Tx}}{\min\br{\frac{a_2}{2a_1},\frac{1}{2}} \sum\limits_{q \in P} \abs{q^Tx}}.
\]

Following the same steps done at the proof of Theorem~\ref{thm:relu_core}, we can generate an $\eps$-coreset with respect to $\term{P, \phi, X^\prime}$.

\section{Implementation Details}
\label{sec:implementation_det}

First observe that the \emph{map} function in Line~\ref{alg_inf:map} of Algorithm~\ref{alg:mainINF} is hard to implement if all we have is $Y$ and $P^\prime$ (see Lines~\ref{alg_inf:affine}--\ref{alg_inf:Pprime}) due to the fact that when the rank of $P$ is not $d$, then $Y$ becomes a singular matrix. A way around such problem (practically speaking yet also theoretically sound) is to reformulate $P$ and $P^\prime$ as matrices, where our $\ell_\infty$-coreset will now be regarded as a set of indices of the rows selected as the desired coreset. \textbf{Note} that while we relied on an ``accurate'' measure of the rank of points in Algorithm~\ref{alg:mainINF}, in our experiments, we used the rank function from \emph{Numpy}, and still produced favorable results. Furthermore, our algorithms work also when the input has full rank. In addition, we can still obtain an $\varepsilon$-coreset when using approximated algorithms for the rank computation problem, where the error associated with our coreset may increase. In this case, we can increase our coreset size to reduce our approximation error to be the original desired error.

\section{Complexity Measure - Clarification}

First, Note that while the complexity measure was first defined for construction of coresets with respect to the logistic regression problem, it also has been used for the ReLU regression problem (minimizing the sum of ReLU losses)~\cite{mai2021coresets}.

The complexity is expected to be small other than in some cases~\cite{munteanu2018coresets,mai2021coresets}. We also operate under the same assumption, i.e., the complexity measure is reasonably small. 

Specifically speaking, when given a set $P$ (expressing our neurons) containing points in $\mathbb{R}^3$ (for example), such that point in $P$ lie on a 2-dimensional affine subspace parallel to the $xy$-plane, notice that in our setting, the complexity measure is defined as the maximal value of an optimization problem involving our vectors and a set of queries $X:=\mathbb{R}^2 \times \left\lbrace 1 \right\rbrace$. 

The existing of a hyperplane whose normal in $X$ such that one point from $P$ can be separated from the rest of the points in $P$, leads to large complexity measure.

\begin{figure}[htb!]
    \centering
    \includegraphics[width=.7\linewidth]{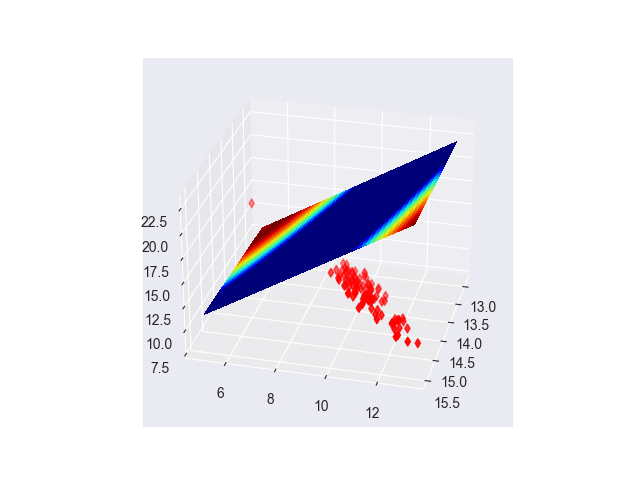}
    \caption{Linearly separable data leading to sufficiently large complexity measure.}
    \label{fig:badComplexity}
\end{figure}

In Figure~\ref{fig:badComplexity}, a clear separation can be made between one point and the rest leading to two sets of points: The first set contains one single point that has a positive dot product with the normal $x$ to the separating hyperplane, while the other set contains the remaining point each with negative dot product with $x$. This leads to a large complexity measure, and as the separating hyperplane gets closer and closer to the set containing the single point, the complexity measure increases, as it can tend to infinity.

On the other hand, when one can not separate a single point or minimal set of points from the rest of the data, we expect the complexity measure to be small; see Figure~\ref{fig:goodComplexity}.

\begin{figure}[htb!]
    \centering
    \includegraphics[width=.7\linewidth]{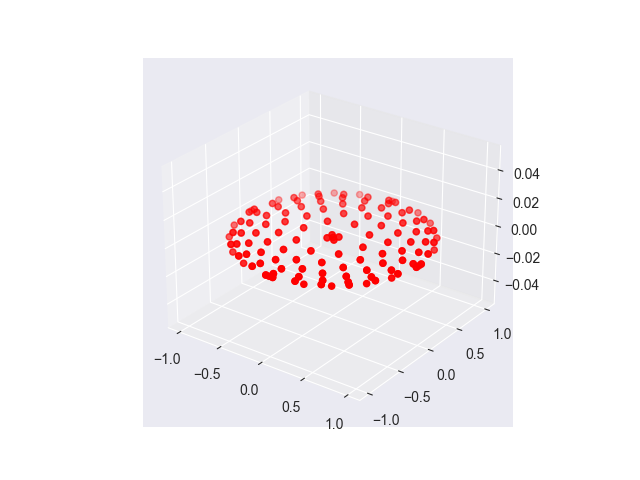}
    \caption{Non-linearly separable data leading to sufficiently small complexity measure.}
    \label{fig:goodComplexity}
\end{figure}

Notice that in Figure~\ref{fig:goodComplexity}, the input data is centered around the origin, which means that the data is not linearly separable. Thus leading to small complexity measure, since $x$ represents the normal to a hyperplane emerging from the origin. 

In fact, during our experiments, the complexity measure in the case of LeNet300-100 was around $15$ when the input data (matrix representing the neurons) had $300$ rows (points). 

It is common in coreset literature from a practical point of view, sample sizes that are smaller than the bound on the coreset size are being used. This is due to the fact that such bounds are pessimistic in nature. 

This motivated the choice of not incorporating the complexity measure in our sample size nor the sensitivity sampling since such a term will be eliminated when computing the sampling probability; see Theorem~\ref{thm:braverman_coreset}. Our experiments confirmed such an observation, i.e., our coreset lead to favorable results when the complexity measure was not incorporated in our computations, or when the sample size was much smaller than the bound on the coreset size.

The appendix in the supplementary material has been modified in light of this.

\section{Additional Experiments}
In all of our experiments, our hyper-parameters were drawn from~\cite{liebenwein2020provable}.


 

\subsection{The effect of fine-tuning}
In this experiment, we aim to show the effect of fine-tuning on our compressed model. Specifically, Figure~\ref{fig:against_ben_fine_tuning} shows VGG19's network accuracy over fine-tuning, where we start better than previous methods, followed by a slow incline in accuracy until we outperform previous models (around epoch $18$).

\label{sec:fine_tuning_exp}
\begin{figure}[htb!]
    \centering
    \includegraphics[width=\linewidth]{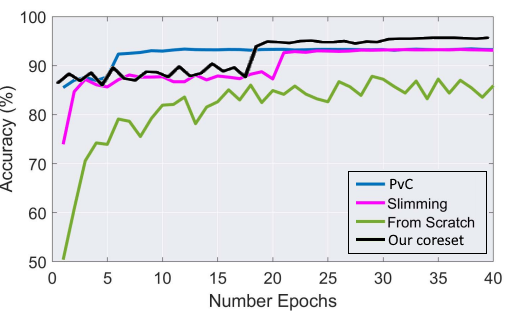}
    \caption{Accuracy of our proposed framework in comparison to previous methods and training the pruned network from scratch. The results above reflect the accuracy of VGG19 on CIFAR10.}
    \label{fig:against_ben_fine_tuning}
\end{figure}

\subsection{Sensitivity based distribution} At Figure~\ref{fig:probability}, we plot the sensitivity distribution of our sampling method in comparison to the sampling probabilities achieved by~\cite{mussay2021data}. Our advantage lies in the observation that our induced probability distribution entails longer tails, i.e., important point are scarce.

\begin{figure}[htb!]
    \centering
    \includegraphics[width=.7\textwidth]{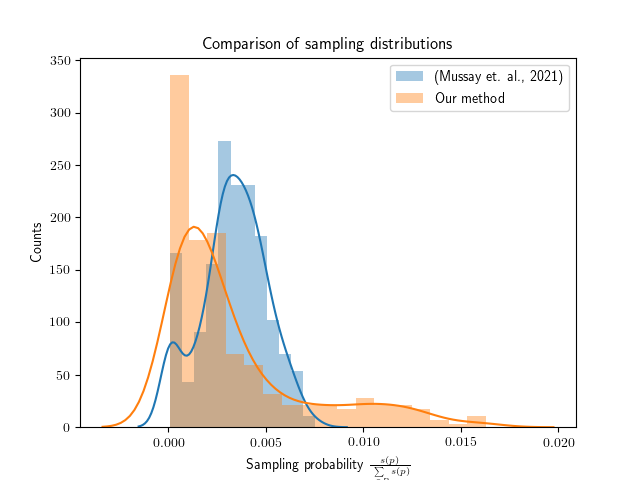}
    \caption{A comparison against~\cite{mussay2021data} with respect to the distribution of the sampling probabilities of weights of a single neuron at some layer of LeNet-$300$-$100$. Here the $x$-axis denotes the sampling probability of points, while the $y$-axis presents the number of points with certain probability.}
    \label{fig:probability}
    \end{figure}

\subsection{Comparison with PvC}
In this experiment, we aim to show the efficacy of our approach against that of~\cite{mussay2021data}. We considered a single neuron in LeNet-300-100 where we computed the average additive error of the cost of the coreset from the cost of taking all the samples (neurons from previous layer), over set of $1000$ queries. As shown in Figure~\ref{fig:us_neuron_mussay}, for very small coreset sizes, PvC~\cite{mussay2021data} attains smaller error, however as we increase the sample size, our coreset outperforms that of~\cite{mussay2021data}.

\begin{figure}[htb!]
    \centering
    \includegraphics[width=0.7\textwidth]{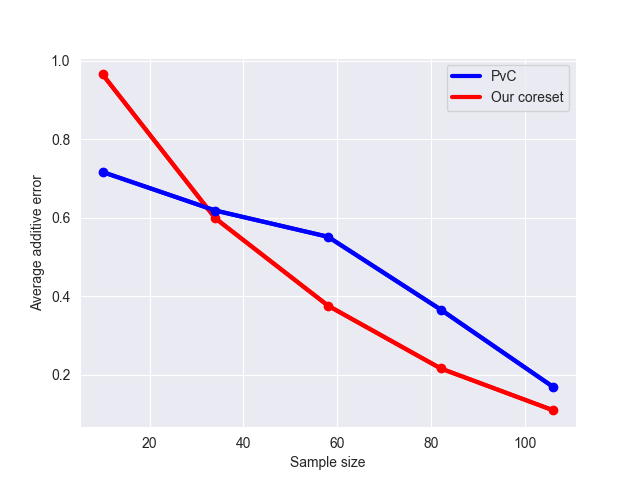}
    \caption{Average additive error of our coreset and that of~\cite{mussay2021data} on LeNet-300-100.}
    \label{fig:us_neuron_mussay}
\end{figure}

\end{document}